\documentclass{article} 
\usepackage{iclr2027_conference,times}

\usepackage{amsmath,amsfonts,bm}

\def\eqref#1{equation~\ref{#1}}

\def\1{\bm{1}}

\DeclareMathAlphabet{\mathsfit}{\encodingdefault}{\sfdefault}{m}{sl}
\SetMathAlphabet{\mathsfit}{bold}{\encodingdefault}{\sfdefault}{bx}{n}

\usepackage{hyperref}
\usepackage{url}

\title{Admission Without Answers: Label-Free \\
Certification and Experience Learning for \\
LLM-Based Optimization Modeling}

\iclrfinalcopy  

\author{Junbo Jacob Lian
\\
Institute of Operations Research and Analytics\\
National University of Singapore, Singapore\\
Wenzhou Buyi Pharmacy, Wenzhou, China\\
\texttt{jacoblian@u.northwestern.edu} \\
\And
Huiling Chen \\
College of Computer Science and Artificial Intelligence\\
Wenzhou University, Wenzhou, China\\
\texttt{chenhuiling.jlu@gmail.com} \\
\AND
Hanzhang Qin \\
Institute of Operations Research and Analytics\\
National University of Singapore, Singapore\\
\texttt{hzqin@nus.edu.sg} \\
\And
Chung-Piaw Teo \\
Institute of Operations Research and Analytics\\
National University of Singapore, Singapore\\
\texttt{bizteocp@nus.edu.sg} \\
}

\usepackage{graphicx}
\usepackage{booktabs}
\usepackage{xcolor}
\usepackage{amsthm}
\usepackage{algorithm}
\usepackage{algpseudocode}
\usepackage{fancyvrb}
\usepackage{placeins}
\newtheorem{proposition}{Proposition}
\newtheorem{lemma}{Lemma}
\newtheorem{assumption}{Assumption}
\newtheorem*{propositionformal}{Proposition~\ref{prop:ident} (formal)}
\newtheorem*{lemmaformal}{Lemma~\ref{lem:policy} (formal)}
\newcommand{\admitor}{\textsc{AdmitOR}}
\begin{document}

\maketitle

\begin{abstract}
Agents that learn from experience improve at optimization modeling by
storing solved trajectories and reusing them as skills. A wrong
trajectory that enters the library can be retrieved again and again, and
on a stream of new problems there is no ground-truth answer to decide
with. Existing learners admit trajectories
by matching known optima or labels, and label-free substitutes such as
execution success or agreement at one instance can admit wrong models. We introduce \admitor{}, a
label-free admission gate. It generates models from three model
families, runs each on the stated problem and on instances with resampled
parameters, keeps the largest group of models whose optimal values agree
on every instance across families, and applies a threshold fitted on
solver-verified problems to accept, abstain, or escalate, with a
finite-sample bound on the false-discovery rate among accepted values.
Inside a state-of-the-art skill learner, \admitor{} raises
candidate-level admission precision to 0.927, against 0.871 for majority
vote over the host's own samples and 0.726 for execution success, and its
library, the smallest of the four, reaches the highest macro accuracy over
five public benchmarks, 58.4 against 54.8 for majority vote. An ablation on
the same records shows that the gain comes from the accepted value being
external to the learner and unanimous across families; on this stream,
resampling never changed an accepted value and only reduced coverage. The
false-discovery bound holds on the calibration set but not on the
benchmark stream: an audit of every false certificate traces most of them
to benchmark texts that omit or round the numbers needed to reproduce the
labeled answer, and a label-free check of the extracted numbers against
the text flags most of these cases. Code and data are available at \url{https://github.com/junbolian/AdmitOR}
\end{abstract}

\section{Introduction}
\label{sec:intro}

Large language models can translate natural-language descriptions of
operational problems into executable optimization models, and recent
agents improve further by learning from experience: solved trajectories
are distilled into reusable insights, skills, or exemplars
\citep{alphaopt2025,optskills2026,leanllmopt2026}. Once stored, however, a
wrong trajectory is no longer a single wrong answer: it can be retrieved
repeatedly and affect many later decisions. Existing learners avoid this risk by
admitting only trajectories that match known optima \citep{alphaopt2025},
are labeled against ground truth \citep{optskills2026}, or are curated by
experts \citep{leanllmopt2026}. A stream of real problems provides no such
answers, and we refer to this lack of reliable answers as the label wall.

The natural label-free substitutes are unsafe. Execution success shows
only that a program runs, not that it encodes the right objective,
constraints, and data. Self-assessment is unreliable: models miss their
own errors and prefer their own outputs \citep{huang2024large,gou2024critic,
kamoi2024selfcorrect,panickssery2024selfpref}, and at the system level
\citet{optgraph2026} admit memories by self-assessment and execution
success and report that naive retrieval from the resulting store can
reduce accuracy. On our 300-problem stream, an execution-only rule admits
878 candidate models, 241 of which disagree with the withheld answers. A
useful admission rule therefore has to test how a model behaves, not
whether it runs or how confident it reports itself to be.

Behavior at one instance is not enough either. Consider allocating up to
150 crates across three stores with capacities of 60, unit profits of 8,
6, and 4, and a contractual floor of 20 crates per store. At the stated
instance the floor is inactive: the correct model and a model that omits
the floor both ship 60, 60, and 30 crates and both return 960. Comparing
final objective values, whether by majority vote or against an answer
key, cannot distinguish the two formulations here; their agreement is
accidental and breaks once the parameters change. On a resampled instance where the third store's unit profit is negative,
the floor becomes binding: the correct model still ships 20 crates to
that store, the other model ships none, and the two optimal values differ
(Figure~\ref{fig:overview}a). The evidence we need is therefore about how
a model's optimal value changes with the parameters, not about one
answer.

\admitor{} (read \emph{admitter}) builds its decision on this kind of
evidence. It generates candidate models from three model families with
different prompting strategies and solver stacks and runs every candidate
on the stated problem and on instances resampled from a parameter domain
anchored at the stated values. Candidates that describe the same problem
should return the same optimal value on every instance; because not every
candidate is expected to agree, the gate keeps the largest group whose
values agree on every instance and requires that group to span more than
one family. Proposition~\ref{prop:ident} shows that two
models with different value functions differ on a region of positive
measure, so independent resampling misses that region with probability
that decreases geometrically in the number of usable instances. A
threshold fitted on solver-verified problems then turns the strength of
the agreement into \textsc{accept}, \textsc{abstain}, or
\textsc{escalate}, with the false-discovery rate (FDR) among accepted
values as the quantity under control. Only accepted values are used for
learning.

We evaluate the gate inside a running skill learner \citep{optskills2026}
with a collect-once, replay-many protocol: candidate models and solver
logs are generated once on a 300-problem stream whose labels are
withheld, and four judges replay the same records to build separate
libraries, using sealed ground truth, majority vote over the host's own
sampled trajectories, execution success, or \admitor{}. Candidate-level
admission precision rises from 0.726 for execution success to 0.871 for
majority vote and 0.927 for \admitor{}, and the \admitor{} library is the
smallest and reaches the highest macro accuracy across five public
benchmarks. Two further analyses use the same records: an ablation over
intermediate judges shows which part of the gate produced the gain, and
the preregistered false-discovery criterion, which holds on
solver-verified data, fails on the benchmark stream for a reason that an
audit of every false certificate finds in the benchmark texts themselves. The paper
thus reports where the method performs well, which of its parts did the
work, and where its assumptions fail.

\begin{figure}[tb]
  \centering
  \includegraphics[width=0.84\linewidth]{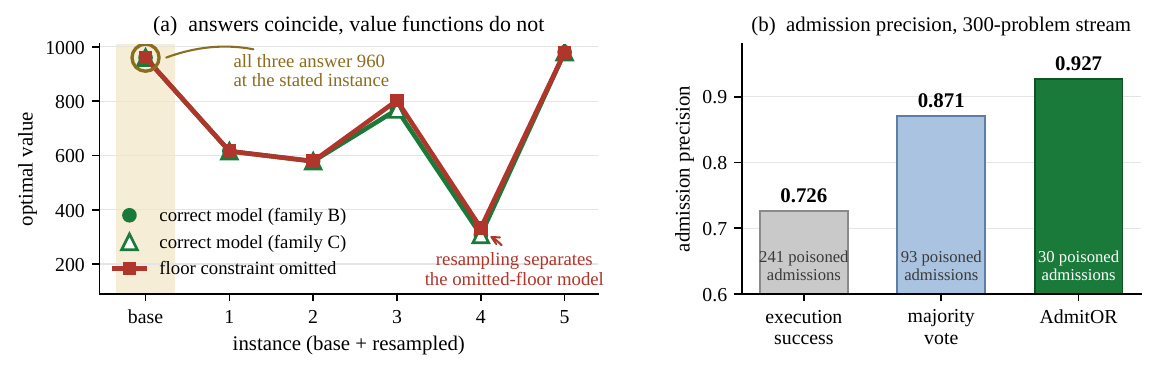}
  \caption{\textbf{(a)} Why one-point agreement is insufficient. A model
  omitting the floor constraint returns 960 at the stated instance, exactly
  as the correct models do, but its optimal value separates on resampled
  instances, and the cross-family agreement excludes it. \textbf{(b)}
  Candidate-level admission precision of three label-free judges on the
  same 300-problem stream, with poisoned-admission counts.}
  \label{fig:overview}
  \vspace{-6pt}
\end{figure}

\paragraph{Contributions.}
\textbf{(i) Problem.} Admission into a persistent library is the step
that current experience learners leave to ground-truth labels. We argue
that the right safety target for this step is the false-discovery rate
among admitted items, not certainty about each one.
\textbf{(ii) Method.} We design an admission procedure that compares
independently generated models from several model families and tests
whether their optimal values stay consistent under parameter changes,
with an identifiability result and a finite-sample calibration whose
transfer assumption is stated explicitly. \textbf{(iii) Protocol.} A collect-once, replay-many design
holds candidate generations and solver logs fixed while changing only
the judge, so that judges are compared on identical inputs. \textbf{(iv) Evidence.} \admitor{} produces
8$\times$ fewer poisoned admissions than execution success and reaches
the highest downstream accuracy with the fewest library items, with a
paired-bootstrap gain over majority vote; an ablation on the same records
attributes the gain to an external, unanimous certificate and shows that
resampling changed no accepted value on this stream. \textbf{(v)
Measurement.} Auditing every false certificate shows that many failures
come from missing or rounded numbers in the benchmark text rather than
from the generated models: at least $10.9\%$ of admitted problems cannot
be answered from the printed text alone, and a check of the extracted
numbers against the text flags most of these cases without labels. Code,
verdicts, run ledger, and complete case packets are released.

\section{Related Work}
\label{sec:related}

\paragraph{Experience learning with labeled admission.}
The systems closest to our setting learn reusable knowledge from solved
problems, but all rely on answers for admission: insight libraries
matched against known optima \citep{alphaopt2025}, skills distilled from
ground-truth-labeled trajectories \citep{optskills2026}, and expert
exemplar banks \citep{leanllmopt2026}. \admitor{} replaces this
supervision signal; we evaluate it inside one of these systems with the
native ground-truth oracle as the reference (Section~\ref{sec:e1}).

\paragraph{Self-verification and its limits.}
Intrinsic self-correction can degrade reasoning \citep{huang2024large},
effective critique requires external tools \citep{gou2024critic}, and
models miss their own errors \citep{kamoi2024selfcorrect} while favoring
their own outputs \citep{panickssery2024selfpref};
\citet{optgraph2026} show the failure at the system level.
\citet{falsver2026} prove that no sound and nontrivial fixed-threshold
perturbation tester exists; they establish sound tests for individual
models, whereas we calibrate admission across several models.

\paragraph{Consensus, voting, and juries.}
Self-consistency and its variants select the answer that recurs most
often across sampled reasoning paths
\citep{selfconsistency2023,usc2023,adaptiveconsistency2023}. These
methods, like jury-style aggregation \citep{jury2026}, annotation-free
routing \citep{cascal2026,smoothie2024}, and cross-backbone debate
\citep{agoraopt2026}, aggregate agreement at a single problem instance,
and they select an answer rather than bound the error rate of what is
accepted. Our
majority-vote arm is self-consistency as the host runs it: the modal
answer of the three trajectories the host samples for a problem, all
from one backbone (Section~\ref{sec:e1}). \admitor{} differs in both:
agreement is measured over optimal values on resampled instances, and
the output is an admission decision with a calibrated error budget. The
shared-error floor of \citet{jury2026} motivates the use of several
model families, and
\citet{selfplayjudge2026} identifies the failure it protects against: a
judge conditioned on a shown candidate scores plausibility rather than
correctness, and a strict three-judge ensemble still admits $55\%$ of
the resulting false positives unless each judge first commits an answer
of its own. Every family in our panel commits its own executed answer before any
comparison; the extractor is the one shared stage (Section~\ref{sec:e3}).

\paragraph{Benchmark reliability.}
The benchmarks used here
\citep{complexor2024,industryor2024,mamo2024,optmath2025,optibench2024}
are known to contain errors. An expert audit with each case
cross-validated by at least three reviewers reports rates of at least
$54.0\%$ on IndustryOR, $24.3\%$ on ComplexOR and $23.7\%$ on Mamo
ComplexLP, attributes them to inconsistent statements, underdetermined
parameters and incorrect answers, and releases cleaned versions
\citep{orsurvey2025}. Those audits review datasets by hand;
Section~\ref{sec:e3} reaches the same object from the other direction:
the disagreements of a label-free judge locate one specific error, a
printed text from which the reference answer cannot be recovered, and
quantify it as a floor on the false-discovery rate that any text-faithful
certifier can reach.

\paragraph{Verifiers for optimization modeling.}
Modeling agents such as OptiMUS \citep{optimus2024} check their own
formulations by re-solving; recent verifiers, semantic checkers, and
behavioral test batteries
\citep{optverifier2026,verisimpl2026,sacopt2025,optargus2026,reloop2026}
filter individual generations, and the test batteries of
\citet{reloop2026} and \citet{falsver2026} perturb the instance of a
single model and check its response. These methods verify or select
single outputs without deciding admission into a persistent library or
bounding the false-discovery rate of what is admitted, so
Table~\ref{tab:position} lists single-model perturbation testing as its
own row.

\paragraph{Conformal selection.}
Our admission layer is finite-sample calibrated selection toward an FDR
target, with split-conformal Benjamini--Hochberg \citep{jincandes2023}
as its large-sample version, applied to a new score, cross-family
agreement over resampled optimal values, and combined with
policy-matched calibration because escalation makes admission adaptive
(Lemma~\ref{lem:policy}). Table~\ref{tab:position} summarizes four
properties of an admission judge; to our knowledge, \admitor{} is the
first label-free method designed for all four, and
Section~\ref{sec:e3} shows where the calibrated budget stops
transferring.

\begin{table}[tb]
\centering
\caption{Properties targeted by each admission judge. \emph{Executed
value}: compares executed objective values rather than run status, text,
or self-report. \emph{Beyond one point}: the comparison spans resampled
instances. \emph{Budget}: targets a calibrated false-discovery rate.}
\label{tab:position}
\small
\setlength{\tabcolsep}{5pt}
\begin{tabular}{lcccc}
\toprule
Judge & Label-free & Executed value & Beyond one point & Budget \\
\midrule
Ground-truth labels & $\times$ & $\checkmark$ & $\times$ & $\times$ \\
Execution success & $\checkmark$ & $\times$ & $\times$ & $\times$ \\
Majority vote & $\checkmark$ & $\checkmark$ & $\times$ & $\times$ \\
Single-model perturbation test & $\checkmark$ & $\checkmark$ & $\checkmark$ & $\times$ \\
\admitor{} & $\checkmark$ & $\checkmark$ & $\checkmark$ & $\checkmark$ \\
\bottomrule
\end{tabular}
\vspace{-6pt}
\end{table}

\section{The \admitor{} Gate}
\label{sec:method}

\begin{figure}[tb]
  \centering
  \includegraphics[width=0.84\linewidth]{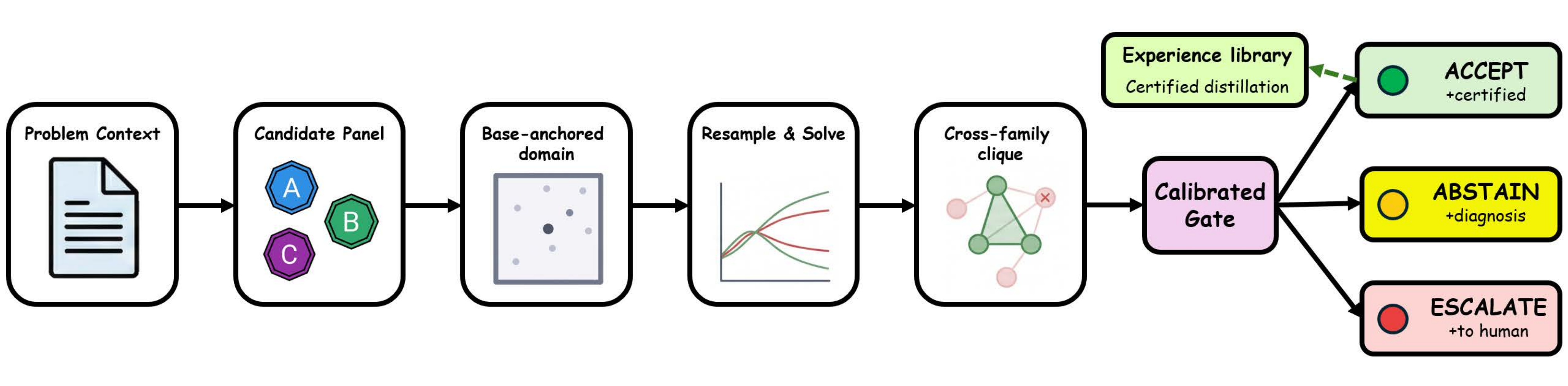}
  \caption{The \admitor{} gate. Candidates from three model families are
  run on instances resampled from a domain anchored at the stated
  problem, the largest group of candidates whose optimal values agree on
  every instance across families is found, and a calibrated threshold
  turns that agreement into a three-state decision toward $\alpha = 5\%$
  with a controlled budget of $2\alpha$ (Proposition~\ref{prop:fdr}).
  Accepted values relabel the host's trajectories, which are then
  distilled into the library.}
  \label{fig:pipeline}
  \vspace{-6pt}
\end{figure}

\paragraph{Setup.}
The idea is that two models of the same problem should return the same
optimal value whenever the parameters change. A \emph{ticket} is a natural-language
optimization problem $x$ with stated parameter values $\theta_0$ and a
perturbation domain $\Theta$ anchored at $\theta_0$. A candidate $M$ is
an executable program that builds and solves a model for any
$\theta\in\Theta$; its optimal objective defines the \emph{value
function} $V_M(\theta)$. Two candidates are behaviorally equivalent when
their value functions agree on $\Theta$, whatever their code. The gate
receives a panel $\mathcal{C}=\{M_1,\dots,M_k\}$ with family labels
$f(M_i)$ and returns \textsc{accept} with an accepted value
$\hat z=V(\theta_0)$, \textsc{abstain}, or \textsc{escalate}. We call an
\textsc{accept} that carries a value at $\theta_0$ a \emph{certificate};
the word refers to the agreement that supports the value, not to a proof
of correctness. An accept whose agreeing group formed only on resampled
instances certifies nothing and admits nothing. Insufficient comparable
evidence is logged as \textsc{uninformative} and maps to escalation in
deployment. Section~\ref{sec:e1} evaluates admitted candidate models;
Section~\ref{sec:e3} evaluates problem-level certificates, the units of
the FDR target. Figure~\ref{fig:pipeline} and Algorithm~\ref{alg:gate}
(Appendix~\ref{app:protocol}) summarize the five steps.

\paragraph{Step 1: generate a decorrelated panel.}
We generate candidates from three model families under different
prompting strategies and solver stacks, each on a separate backend. Every
consensus method has a shared-error floor, the probability that all panel
members make the \emph{same} mistake, and the floor is higher within one
family because related models share training data and failure modes
\citep{jury2026}. Diversity across families and solver APIs lowers
several sources of correlated failure but cannot detect a
misinterpretation of the input text shared by all of them
(Section~\ref{sec:limits}).

\paragraph{Step 2: extract a base-anchored domain and resample.}
An extractor LLM reads $x$ and returns the stated base value and a
relative or absolute perturbation domain for each parameter. The domain
is anchored so that instance $0$ is exactly the stated problem, and
guardrails keep structural size parameters fixed so that resampled
instances preserve their dimensions (Appendix~\ref{app:protocol}). The
pilot runs a single extractor; Appendix~\ref{app:datalayer} specifies
the check on the extracted numbers that Section~\ref{sec:e3} motivates.
The harness then draws $\theta_1,\dots,\theta_m$ independently from the
domain and runs every candidate on every instance, recording the
objective value and solver status of each run. An instance is
\emph{informative} when at least two candidates return a finite value;
infeasible and failed runs make an instance uninformative, and the gate
requires at least three informative instances before issuing a verdict.

\paragraph{Step 3: compare value-function traces.}
Two candidates agree on an instance when their objective values $a$ and
$b$ satisfy $|a-b|\le 10^{-4}\max(1,|a|,|b|)$, and they are consistent
when they agree on every informative instance, including the stated one.
Pairwise consistency defines a graph over $\mathcal{C}$, and the gate
takes a maximum clique of that graph, the largest group of candidates
that are all consistent with one another. If the clique spans at least
two model families, the gate returns \textsc{accept} with the common
value at $\theta_0$; otherwise it returns \textsc{abstain}. With one
candidate per family, as in the pilot, clique size and family coverage
coincide. Each candidate outside the clique is rejected with a diagnosis
naming a resampled instance on which it differs, which localizes the
modeling disagreement: in the crate example, a negative unit profit
activates the omitted floor constraint. Proposition~\ref{prop:ident}
states when random perturbations detect such a difference.

\begin{proposition}
\label{prop:ident}
For linear and mixed-integer models over a full-dimensional perturbation
domain, the candidate value functions are piecewise polynomial in
$\theta$, and affine when only right-hand sides or only costs vary. If
two candidates are not equivalent on the domain, the disagreement region contains an open
set. Under continuous resampling, the probability that $m$ independent
instances all miss this region decreases geometrically with $m$, at a
rate set by the probability measure of the disagreement region. With
an agreement tolerance $\varepsilon$, the same bound holds for the region
where the two value functions differ by more than $\varepsilon$ whenever
that region has positive measure; a modeling error whose effect on the
optimal value stays within $\varepsilon$ over the whole domain is not
detectable by resampling. A formal statement, the proof, and the scope
of the result for integer-valued parameters appear in
Appendix~\ref{app:proofs}.
\end{proposition}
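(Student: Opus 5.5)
I will prove the four claims of Proposition~\ref{prop:ident} in order: the structure of $V_M$, an open disagreement set, the geometric miss bound, and the $\varepsilon$-tolerant version.

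\emph{Structure of $V_M$.} First fix a candidate $M$ whose data (constraint matrix $A(\theta)$, right-hand side $b(\theta)$, cost $c(\theta)$) are polynomial in $\theta$. For a pure LP, I use basis enumeration. On the set where a basis $B$ is optimal and nondegenerate, $V_M(\theta)=c_B(\theta)^\top A_B(\theta)^{-1}b(\theta)$. By Cramer's rule this is a ratio of polynomials, and it becomes polynomial (indeed affine) when only $b$ or only $c$ varies, because then $A_B^{-1}$ is constant. The optimality region of each basis is cut out by finitely many polynomial sign conditions, so $\Theta$ is a finite union of semialgebraic cells on each of which $V_M$ is given by one such expression. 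For mixed-integer models with bounded integer variables, $V_M$ is the minimum over finitely many integer assignments of LP value functions, so the same cell structure survives.

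I will read ``piecewise polynomial'' in the statement as piecewise rational, and note that it is literally polynomial, even affine, in the RHS-only and cost-only cases. This matches the ``affine'' claim. In the RHS-only case the LP value is convex piecewise affine, and in the cost-only case it is concave piecewise affine.

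\emph{Open disagreement set.} Let $D=\{\theta\in\Theta: V_M(\theta)\neq V_{M'}(\theta)\}$, and suppose $D\neq\emptyset$ for two non-equivalent candidates. Refine the two cell decompositions into a common finite semialgebraic partition. Its full-dimensional cells cover $\Theta$ up to a null set, and on each such cell both value functions are analytic (rational).

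This step is the main obstacle. A priori $D$ could be nonempty yet measure-zero, for example if the two functions differ only on a lower-dimensional boundary where one model is degenerate or infeasible. I will handle it by defining equivalence modulo null sets, or by invoking continuity of $V$ on its finite domain: LP value functions are continuous where finite, and MILP values with rational data are lower semicontinuous. So if the values differ at $\theta^*$, the difference persists in a neighbourhood intersected with a full-dimensional adjacent cell.

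Once some full-dimensional cell $P$ has $V_M\neq V_{M'}$ somewhere in it, the difference is a nonzero rational function on $P$. Its zero set is a proper algebraic subset, hence closed and null, so $D\cap \mathrm{int}\,P$ contains an open set. I will state the needed continuity or a.e.\ equivalence as an explicit hypothesis in the formal version.

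\emph{Geometric miss bound and tolerance.} Let the resampling law $\mu$ have a density positive on $\Theta$. Then $p=\mu(D)>0$, and independence gives
\[
\Pr(\theta_1,\dots,\theta_m\notin D)=(1-p)^m\le e^{-pm}.
\]
The same holds with $m$ replaced by the number of informative instances, by conditioning on informativeness, provided informativeness is independent of $D$-membership or $p$ is taken as the conditional measure.

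For tolerance $\varepsilon$, let $D_\varepsilon=\{|V_M-V_{M'}|>\varepsilon\}$. This set is open inside each cell because the value functions are continuous there, so the same bound holds with $p_\varepsilon=\mu(D_\varepsilon)$ whenever $p_\varepsilon>0$. Conversely, if $|V_M-V_{M'}|\le\varepsilon$ on all of $\Theta$, every pair of instances agrees under the test and resampling cannot detect the error. I will finish by noting that integer-valued parameters put $\mu$ on a lattice. The open-set argument then fails, but the bound $(1-\mu(D))^m$ still holds directly whenever $D$ contains a lattice point of positive mass.
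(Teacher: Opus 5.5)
Your route is essentially the paper's. For each integer pattern you decompose the parametric LP by optimal basis, take the MILP value as a pointwise minimum over finitely many patterns, and refine to cells on which $V_M-V_{M'}$ is a single polynomial. In your $\theta$-dependent-$A$ generalization it is rational instead, a case the paper only notes in its Remark~(4). You then use that a nonzero such function has a null zero set, apply independence to get $(1-p)^m\le e^{-pm}$, and repeat the argument for $D_\varepsilon$.

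The one step to discard is the continuity fallback for your ``main obstacle'': it fails, and only the almost-everywhere branch survives. Lower semicontinuity does not carry a pointwise difference to a neighbourhood. MILP value functions can differ at a single point even with constant $A$ and affinely varying right-hand sides. For example, $V(\theta)=\min\{-y : y\le 1-\theta,\ y\le 1+\theta,\ y\in\{0,1\}\}$ on $\Theta=[-\tfrac12,\tfrac12]$ equals $-1$ at $\theta=0$ and $0$ elsewhere. Against a candidate with $V'\equiv 0$, the disagreement set is $D=\{0\}$, which contains no open set. In your generality the LP continuity claim fails too: $\min\{-x : \theta x=0,\ 0\le x\le 1\}$ is finite everywhere and jumps at $\theta=0$.

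So ``not equivalent'' has to mean ``not equal almost everywhere'', and the formal result is the paper's dichotomy. Either $\lambda(D)=0$, and the gate treats the two candidates as equivalent. Or $\lambda(D)>0$; then some cell function $q_k$ is nonzero, and $D$ contains the nonempty open set $C_k\setminus\{q_k=0\}$.

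Two smaller points on the structure claim. Under the paper's standing assumption (constant $A$, with $b$ and $c$ affine in $\theta$), your own formula $c_B(\theta)^\top A_B^{-1}b(\theta)$ is already a polynomial of degree at most two. You therefore need not weaken ``piecewise polynomial'' to ``piecewise rational''. Also, the ``affine'' claim needs $b$ or $c$ to be affine in $\theta$, not merely polynomial.

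Finally, the paper assumes a density at least $\rho$ times Lebesgue measure, which gives the quantitative rate $p\ge\rho\,\lambda(D)$. Your positive-density assumption gives only $p>0$, which suffices for the qualitative claim.
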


\paragraph{Step 4: calibrate the admission threshold.}
Agreement is evidence rather than proof, so its strength is calibrated
rather than thresholded by hand. The deployed score orders the number of
families in the clique first, then the clique size, then the number of
informative instances, encoded as the scalar $10\cdot(\text{families}) +
|Q| + (\text{informative})/10$ (Algorithm~\ref{alg:gate}). It takes
finitely many values, four of which occur in the pilot; pairwise
agreement margins are recorded but do not enter the score. We fit the
acceptance threshold $\tau$ on problems whose ground truth is verified
\emph{by construction}: the unchanged gate runs on a stratified sample of
solver-verified synthetic instances, so every accepted certificate can be
labeled true or false without benchmark data. A fixed threshold is not a
valid substitute, because no sound and nontrivial fixed-threshold
perturbation tester exists \citep{falsver2026}, and the following pair
licenses the selection.

\begin{assumption}[Transfer]
\label{a:transfer}
(i) \emph{Faithful encoding}: the problem text determines the labeled
instance, so the extracted base specification identifies it up to scoring
tolerance. (ii) \emph{Certificate exchangeability}: accepted deployment
certificates with score at least $\tau$ are exchangeable with calibration
certificates at the same score with respect to the true/false label.
\end{assumption}

\begin{proposition}[Calibrated admission]
\label{prop:fdr}
Let $T$ be the attainable score grid ($|T|=4$ in the pilot), $p_\tau$
the false-certificate probability among accepted certificates with score
at least $\tau$, and $U_\tau$ the level-$(1{-}\delta)$ Clopper--Pearson
upper bound from the calibration counts. The fit selects
$\tau^{\ast}=\min\{\tau\in T: \hat p_\tau\le\alpha,\
U_\tau\le 2\alpha\}$. Then (i) for each fixed $\tau$,
$\Pr(p_\tau\le U_\tau)\ge 1-\delta$ exactly; (ii) over the
data-dependent selection, $p_{\tau^{\ast}}\le 2\alpha$ holds
simultaneously with probability at least $1-|T|\delta$; (iii) under
Assumption~\ref{a:transfer}, the same bounds apply to the false-certification
rate among deployment admissions at $\tau^{\ast}$; and (iv) as the
calibration null count grows, Benjamini--Hochberg over split-conformal
p-values replaces the grid bound and controls the false-discovery
rate, the expectation of the false-discovery proportion, at level $\alpha$
in finite samples under clause (ii) alone \citep{jincandes2023}. Proof in
Appendix~\ref{app:proofs}.
\end{proposition}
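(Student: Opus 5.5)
The plan is to prove the four clauses in order, treating each threshold $\tau\in T$ as defining its own binomial experiment. For clause (i), fix $\tau$ and condition on the number $n_\tau$ of calibration certificates with score at least $\tau$. Under the calibration sampling scheme (stratified, i.i.d.\ within the sampling frame), the number of false certificates $F_\tau$ among them is $\mathrm{Binomial}(n_\tau,p_\tau)$. The Clopper--Pearson upper bound is obtained by inverting the exact binomial tail test: $U_\tau$ is the largest $p$ with $\Pr_p(F\le F_\tau)\ge\delta$. The standard argument is that the event $\{p_\tau>U_\tau\}$ is contained in the event that the observed count falls in the lower $\delta$-tail of $\mathrm{Binomial}(n_\tau,p_\tau)$. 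By monotonicity of the binomial CDF in $p$, that event has probability at most $\delta$. Integrating over $n_\tau$ then gives $\Pr(p_\tau\le U_\tau)\ge1-\delta$. The word ``exactly'' should be read as ``valid without asymptotics,'' and the bound is conservative because of discreteness.

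For clause (ii), I would not try to analyse the selection rule directly; I would use a union bound. Let $E=\bigcap_{\tau\in T}\{p_\tau\le U_\tau\}$. By clause (i) and a union bound over the finite grid, $\Pr(E)\ge1-|T|\delta$. On $E$, whatever $\tau^\ast$ the data select satisfies $U_{\tau^\ast}\le2\alpha$ by the definition of the selection rule, hence $p_{\tau^\ast}\le U_{\tau^\ast}\le2\alpha$. If the feasible set is empty, the gate accepts nothing and the claim holds vacuously. The simultaneity and the data dependence of $\tau^\ast$ are both absorbed by the uniform event $E$. The condition $\hat p_\tau\le\alpha$ plays no role in the guarantee; it only shapes which $\tau$ is chosen.

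For clause (iii), I would transfer the bound through Assumption~\ref{a:transfer}. Part (i) of the assumption ensures that a deployment certificate's true/false label is well defined relative to the labeled instance, so the label means the same thing as in calibration. Part (ii) says that, at each score level, the conditional false-certificate probability for deployment equals that for calibration. Deployment admissions at $\tau^\ast$ are a mixture over score levels at least $\tau^\ast$. I would then check that the false rate among deployment admissions equals $p_{\tau^\ast}$. This requires that the mixture weights over scores match as well, or that exchangeability is read at the level of the whole set $\{\text{score}\ge\tau\}$. I expect this step to be the main obstacle, because per-score exchangeability alone does not fix the score distribution. I would first try reading clause (ii) of the assumption as exchangeability of the set of certificates with score $\ge\tau$, which gives equality directly, and state that reading explicitly. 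Because $\tau^\ast$ is a function of calibration data only, it is independent of the deployment labels, so the same event $E$ carries over.

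For clause (iv), I would cite \citet{jincandes2023}. Define conformal p-values for each deployment certificate by ranking its score among the calibration null (false) certificates' scores, with randomized tie-breaking because the score grid is discrete. Under exchangeability of the null scores with the deployment nulls, which is clause (ii), these p-values are PRDS-compatible in the sense of \citet{jincandes2023}. Benjamini--Hochberg at level $\alpha$ then controls the expected false-discovery proportion at $\alpha$ in finite samples. Clause (i) of the assumption is not needed here because the labels enter only through null membership. As the null count grows, the p-value resolution becomes fine enough for BH to select nontrivially; this is the sense in which it ``replaces'' the grid bound.
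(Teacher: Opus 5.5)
Your proposal is correct and follows the paper's proof essentially step for step. It uses exact Clopper--Pearson coverage for each fixed $\tau$, a union bound over the grid $T$ that absorbs the data-dependent choice of $\tau^{\ast}$, transfer through clause (ii) of Assumption~\ref{a:transfer}, and a citation to \citet{jincandes2023} for the conformal Benjamini--Hochberg form; in step (iii), your explicit set-level reading of clause (ii) is what the paper's proof assumes implicitly when it says deployment certificates at score $\ge\tau^{\ast}$ share the false rate bounded in (ii).
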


Throughout, $\alpha = 5\%$ is the nominal selection target applied to
$\hat p_\tau$ and $2\alpha = 10\%$ is the finite-sample \emph{controlled}
budget; the two figures should not be confused. The pilot uses
$\delta = 0.05$ per threshold and also reports the $95\%$-simultaneous
choice $\delta = 0.0125$ (Table~\ref{tab:e3}). The judge swap of
Section~\ref{sec:e1} ran with the admission rule of Step 3; the threshold
was fitted afterwards on the calibration set and applied to the same
stored verdicts in Section~\ref{sec:e3}, which also measures how far
Assumption~\ref{a:transfer} is violated on the benchmark stream.
Deployment adds one requirement:

\begin{lemma}[Policy-matched calibration, informal]
\label{lem:policy}
If admission is adaptive, borderline cases enter an escalation ladder
that adds instances or candidates before a new decision. The guarantee
then holds only when calibration uses the same ladder policy as
deployment; calibration on single-pass decisions does not remain valid
under escalated deployment (formal statement and proof in
Appendix~\ref{app:proofs}).
\end{lemma}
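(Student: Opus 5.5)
The plan is to write the adaptive gate as a policy $\pi$, which maps a ticket and its random draws to a final verdict, possibly after several escalation rounds. The quantity to control is then the false-certificate probability among certificates \emph{emitted by $\pi$}, namely $p_\tau(\pi)=\Pr(\text{false}\mid \text{accept under }\pi,\ s_\pi\ge\tau)$. Here $s_\pi$ is the score of the final clique, computed after the last escalation step. The lemma has two halves: a positive statement, that matched calibration preserves Proposition~\ref{prop:fdr}, and a negative statement, that mismatched calibration can fail. I would prove them separately.

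\textbf{Positive half.} Run the same ladder policy $\pi$ on the solver-verified calibration tickets. Each calibration ticket then yields at most one final certificate, which carries its score and its true/false label. Conditional on the policy, these are i.i.d. draws from the law of $\pi$'s certificates on the calibration distribution. So for each fixed $\tau$ the count of false certificates among those with score at least $\tau$ is binomial given the number accepted, and the Clopper--Pearson argument behind Proposition~\ref{prop:fdr}(i) applies verbatim to $p_\tau(\pi)$. A union bound over $T$ gives (ii). Assumption~\ref{a:transfer}(ii), stated for certificates produced by $\pi$, gives (iii). One point needs care: the escalation decision depends on intermediate scores, so the same ticket could contribute several intermediate certificates. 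I would handle this by defining the unit of calibration as the terminal verdict only. That keeps the samples independent across tickets even though the number of rounds is adaptive.

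\textbf{Negative half.} Here a counterexample suffices. I would build a two-type population of tickets. Type A tickets are correct, and every informative instance agrees on them. Type B tickets carry a shared error that shows up on a set of small measure $q$, per Proposition~\ref{prop:ident}. Under the single-pass policy, borderline type B tickets that reached a low clique score would be either abstained or caught. The ladder instead adds instances or candidates until the score clears $\tau$. This does catch some errors, at a rate of about $1-(1-q)^{\Delta m}$. But it also promotes surviving false groups into the accepted set at score at least $\tau$, where single-pass calibration had observed almost none at that score. Choosing $q$ and the mixing weight appropriately makes $p_\tau(\pi_{\text{ladder}})>2\alpha$ while $p_\tau(\pi_{\text{single}})\le\alpha$. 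Single-pass calibration would then certify a threshold that deployment violates.

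\textbf{Main obstacle.} The difficulty is stating the formal exchangeability condition so that it is neither vacuous nor false under adaptivity. Conditioning on "score at least $\tau$" is affected by the stopping rule, because the final score is selected. I would first try to define exchangeability directly on the terminal-verdict distribution induced by $\pi$. I would then check that nothing in the Clopper--Pearson step uses how the score was reached, only the law of terminal (score, label) pairs.
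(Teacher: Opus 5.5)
Your proposal is correct, but both halves take a somewhat different route from the paper.

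\textbf{Matched case.} The paper argues through the conformal instrument. It treats the policy as a fixed measurable map $g_\pi$ of a problem's own record, so calibration-null and test-null scores are i.i.d.\ and the split-conformal p-values are valid. Benjamini--Hochberg control then follows from \citet{jincandes2023}, and the Clopper--Pearson grid is said to inherit the same requirement. You instead verify the deployed grid instrument directly, with the terminal verdict as the unit. That is closer to what the pilot actually runs, and it handles the stopping-rule selection cleanly. You should state explicitly the condition the paper names: the ladder reads only the current ticket's record. That is what makes terminal (score, label) pairs independent across tickets.

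\textbf{Mismatched case.} The paper uses a stylized retry model. Null intermediate scores are i.i.d.\ uniform across rounds, $\tau$ is the single-pass $(1-\alpha)$-quantile, and the ladder re-scores up to $K$ times, accepting on the first exceedance. A null is then accepted with probability $1-(1-\alpha)^K$. This gives an explicit inflation for every $K$, but it is a statement about the type I rate. The paper has to add that the false-discovery rate rises only if correct certificates are not promoted in the same proportion.

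Your composition-shift construction aims at $p_\tau$ itself, the quantity Proposition~\ref{prop:fdr} bounds, so it closes exactly that caveat. As written, however, it is asserted rather than built. One way to instantiate it:
\begin{itemize}
\item type A tickets always score at least $\tau$ in one pass, and type B tickets always score below $\tau$;
\item the ladder lifts a type-B ticket above $\tau$ with probability $s>0$; taking $s=1-(1-\beta)^K$, with each round succeeding independently with probability $\beta$, recovers the $K$-dependence of the formal statement;
\item then $p_\tau(\pi_{\mathrm{single}})=0$, while $p_\tau(\pi_{\mathrm{ladder}})=(1-w)s/(w+(1-w)s)$, which exceeds $2\alpha$ once the type-A weight $w$ is small enough.
\end{itemize}

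Finally, drop the detection rate $1-(1-q)^{\Delta m}$ that you attribute to Proposition~\ref{prop:ident}. That result separates candidates whose value functions differ from each other. A clique that shares an error has identical value functions, so resampling never catches it. Setting $q=0$ only strengthens your counterexample.
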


\paragraph{Step 5: certified distillation.}
Only certificates contribute to learning. The host's own trajectories for
the problem are relabeled with the accepted value: a trajectory whose
answer matches it is admitted and passed to the unmodified distillation
procedure of the host, with the accepted value in place of the missing
answer; the panel candidates themselves are not stored. Admission is
therefore a value test against an external certificate, and
Section~\ref{sec:e1} measures what follows from this choice.

\section{Experiments}
\label{sec:experiments}

\paragraph{Setup and pins.}
The host experience learner \citep{optskills2026} runs the released
pipeline with a single fixed backbone. The panel of the gate uses the
three model families of Section~\ref{sec:method}, one candidate per
family, with prompting strategies and solver stacks divided across them;
the extractor is the first family's backbone. Each certification includes
the mandatory stated instance and $m = 5$ resampled instances drawn
under a fixed seed. The 300-problem stream consists of the first 300
problems, by index, of the training split of OptMATH \citep{optmath2025},
with labels withheld from every judge except the ground-truth reference;
it shares no item with the 1{,}100 evaluation items at the exact,
normalized, or near-duplicate level (Appendix~\ref{app:e0}). Every
remaining configuration choice is pinned in the release%
(Appendix~\ref{app:protocol}).

\subsection{Host reproduction and the measuring stick}
\label{sec:e0}
Before replacing the admission judge, we reproduce the complete host on
five public benchmarks
\citep{complexor2024,industryor2024,mamo2024,optmath2025,optibench2024}
using the released skill library and a uniform scorer, because the
repository provides no scoring code. Appendices~\ref{app:protocol} and~\ref{app:e0} give the scoring rule
and per-benchmark differences. The reproduced macro accuracy is within $2.6$
points of the reported result, with two findings that shape what follows.
First, the evaluation labels are imperfect: on ComplexOR our pipeline is
penalized for a correct answer, and correcting that single label recovers
the reported score exactly (Appendix~\ref{app:e0}). The protocol was fixed
before any judge was evaluated, so the main results retain the published
labels, and label errors count against every method, including
\admitor{}. Second, the OptMATH reproduction has a known deficit of $3$
to $5$ points, attributed by partial reruns to solver contention and a
capability gap (Appendix~\ref{app:e0}); we base no central claim on its
absolute scores.

\subsection{The judge swap}
\label{sec:e1}
For the 300-problem stream, we collect candidate generations and solver
logs once. Four judges then replay the same logs, assign labels, and
pass their admitted trajectories to the unmodified distillation procedure
of the host: sealed ground truth, majority vote over the three
trajectories the host samples per problem, execution success, and
\admitor{}. Certification returns 174 \textsc{accept}, 114
\textsc{uninformative}, 10 \textsc{abstain}, and 2 error outcomes, the
errors from truncated extractor output. \textsc{abstain} indicates
value-level disagreement; none of the 114 \textsc{uninformative} cases
arises from conflicting evidence, most being due to infeasible resampled
instances and to candidates that never return a value
(Appendix~\ref{app:e0}). Figure~\ref{fig:overview}b reports admission
precision against the sealed vault over admitted \emph{candidate models},
the units on which each judge acts: 878 candidates for execution success,
721 for majority vote, and 413 for \admitor{}, whose admission recall, the
proportion of correct executable candidates recovered, is $0.601$ against
$1.0$ by construction for execution success. The \admitor{} library was
built with the admission rule of Step 3; the calibrated threshold of
Section~\ref{sec:e3}, applied to the same admissions, would remove 27
problems and 69 candidates, 8 of the 30 poisoned ones, for a precision of
$0.936$. Library sizes fall from 163 files for execution success to 101
for \admitor{} (Appendix~\ref{app:e0}).

\begin{table}[tb]
\centering
\caption{Downstream accuracy (\%, round-aware scorer) of the host with
the library produced by each judge; every arm replays the same collection
logs. \emph{Macro} averages the five benchmarks equally, \emph{Micro}
weights by item over all 1100 items; Mamo.C is Mamo ComplexLP. Bold marks
the best result in each column.}
\label{tab:e1}
\small
\setlength{\tabcolsep}{4pt}
\begin{tabular}{lrrrrrrr}
\toprule
Judge & ComplexOR & IndustryOR & Mamo.C & OptMATH & OptiBench & Macro & Micro \\
\midrule
Ground truth & \textbf{66.67} & 31.00 & 52.61 & \textbf{56.02} & 63.14 & 53.89 & 57.18 \\
Execution success & \textbf{66.67} & 35.00 & 53.08 & 55.42 & \textbf{72.40} & 56.51 & 62.64 \\
Majority vote & 61.11 & 33.00 & 53.55 & 54.22 & 72.23 & 54.82 & 62.18 \\
\admitor{} & \textbf{66.67} & \textbf{39.00} & \textbf{57.82} & \textbf{56.02} & 72.23 & \textbf{58.35} & \textbf{63.91} \\
\bottomrule
\end{tabular}
\vspace{-6pt}
\end{table}

Table~\ref{tab:e1} answers the judge-swap question. The clean comparison
is among label-free judges on the same candidates: \admitor{} matches or
exceeds majority vote on all five benchmarks and adds $3.5$ macro points
while using the smallest library. The ground-truth arm is not an upper
bound, because its downstream result also depends on retrieval and skill
selection. Intervals come from a paired stratified bootstrap on the macro
scale (Appendix~\ref{app:e0}). Against majority vote the gain is $+3.53$
points, with a 95\% interval of $[+0.87, +6.68]$; repeating the comparison
on the item set defined by the sensitivity analysis below leaves it at
$+3.49$, $[+0.78, +6.73]$, so the preregistered criterion K2 is met on
both bases. Against execution success the macro gain of $+1.84$ has
interval $[-0.27, +3.96]$ and does not exclude zero, and three of the
five benchmarks are at or near parity; what we claim over execution
success is therefore equal accuracy with roughly one-third fewer items
and an audit trail, not higher accuracy.

Two qualifications are necessary. First, on OptiBench the ground-truth
arm fails at \emph{skill selection} rather than modeling on 85 of 605
items; the protocol counts these as errors in every arm, and excluding
them raises ground-truth macro accuracy to $55.95$, still below
\admitor{}, with the corrected intervals above using that item set
(Appendix~\ref{app:e0}). Second, downstream accuracy also depends on
retrieval: on OptiBench, which holds 55\% of the items, the label-free
arms differ by $0.2$ points while their libraries differ by 62 files, and
the host selects from 19 distinct files in the \admitor{} arm against 30
for execution success, one file receiving $45\%$ of the \admitor{}
selections (Appendix~\ref{app:e0}). Library size is therefore not what
separates the arms there; the differences arise on IndustryOR and Mamo
ComplexLP.

\begin{table}[tb]
\centering
\caption{Judge ablation on the stored records, without new model calls.
Certificates are problems with an accepted value, scored round-aware
against the vault; admitted candidates are host trajectories matching the
accepted value, scored under the host's equality rule, with recall over
the 637 vault-correct trajectories. Panel judges use the three family
candidates at the stated instance only.}
\label{tab:ablation}
\footnotesize
\setlength{\tabcolsep}{4pt}
\begin{tabular}{lrrrrrr}
\toprule
Judge & Certificates & Wrong & Admitted & Poisoned & Precision & Recall \\
\midrule
Execution success & -- & -- & 878 & 241 & 0.726 & 1.000 \\
Majority vote over host samples & 254 & 38 & 721 & 93 & 0.871 & 0.986 \\
Panel: any two families agree at $\theta_0$ & 245 & 38 & 610 & 45 & 0.926 & 0.887 \\
Panel: all three families agree at $\theta_0$ & 200 & 26 & 516 & 32 & 0.938 & 0.760 \\
\admitor{}, admission rule of Step 3 & 170 & 29 & 413 & 30 & 0.927 & 0.601 \\
\admitor{}, calibrated $\tau$ & 138 & 22 & 344 & 22 & 0.936 & 0.505 \\
\bottomrule
\end{tabular}
\vspace{-6pt}
\end{table}

\paragraph{What produced the gain.}
The judge swap shows that the gate improves on majority vote, but not
which of its parts is responsible. Three further judges that separate
the parts are computable from the stored verdicts: agreement of any two panel families at the stated
instance, unanimity of all three at the stated instance, and the
calibrated rule of Section~\ref{sec:e3}. Table~\ref{tab:ablation} reports
all six judges on the same records, and two facts follow. First, the gain
over majority vote comes from the accepted value being external to the
learner: panel unanimity at the stated instance alone reaches precision
$0.938$. The host's own modal answer admits at least two trajectories
whenever it is wrong, whereas an external value admits a wrong trajectory
only when host and panel err on the same value; the 30 poisoned
admissions of the deployed arm arise on only 15 problems. Second,
resampling changed no accepted value. The calibrated gate admits a strict
subset of the problems on which the panel is unanimous at the stated
instance, with the same accepted value on all 138, and the 62 problems it
withholds carry 4 wrong values and 58 correct ones: 57 are withheld for
want of three informative instances, four because a candidate failed on
perturbed instances, and one because the values diverged, with a correct
stated-instance value in all five. Under the pilot's admission rule this
is the expected shape. A model whose omitted constraint is inactive at the
stated instance returns the correct value there, so admission by value
cannot register the model-level detections that resampling provides, and
the judge swap measures the gate as a label-free value certifier. The
lost coverage comes from the extracted domains, not from unlucky draws:
$27\%$ of all resampled candidate-instance pairs are infeasible, and
re-drawing the 57 withheld problems with solvability screening recovers
8, each with the same accepted value (Appendix~\ref{app:e0}).

\paragraph{The family-by-outcome matrix (K4).}
The remaining preregistered check concerns the panel itself: if the three
arms failed in the same way, several families would add little over one.
Their failure profiles differ ($\chi^2 = 30.96$, $\mathrm{df}=8$,
$p = 1.4\times 10^{-4}$), and the difference survives restriction to
candidates that reached value comparison ($\chi^2 = 8.75$, $\mathrm{df}=2$,
$p = 0.013$), although execution failures, which track the solver stack,
carry most of the full statistic. The matrix and the limits of its
reading appear in Appendix~\ref{app:e0}.

\subsection{Calibrated admission: the criterion that fails}
\label{sec:e3}
We fit the calibrated admission layer as Section~\ref{sec:method}
specifies: the unchanged gate runs on a stratified sample of 150
solver-verified NANO-CO instances \citep{optskills2026} under the
single-pass policy that deployment uses, so the requirement of
Lemma~\ref{lem:policy} is met by construction. The gate returns 103
accepts, 64 of them three-family cliques. The deployed score takes only two
bands, $22.x$ for two-family and $33.x$ for three-family cliques, so the
selected $\tau = 33.3$ means exactly that all three families agree and at
least three instances are informative, and the four attainable thresholds
differ only in the informative count. Among the 63 value-bearing
three-family certificates one is false: $\hat p = 1.6\%$, with upper bound
$7.3\%$ at $\delta = 0.05$ and $9.7\%$ at $\delta = 0.0125$, the
$95\%$-simultaneous choice over the grid, so the rule of
Proposition~\ref{prop:fdr} holds under both choices and $\tau = 33.3$ is
its minimum (Table~\ref{tab:e3}). Replaying this rule on the 170
value-bearing verdicts of Section~\ref{sec:e1} admits 138 cases, which we
score out of sample against the sealed vault.

The preregistered criterion fails, and we report it without modification
(Table~\ref{tab:e3} and Appendix~\ref{app:e0}). Under the uniform scorer,
22 of 138 admitted certificates disagree with the sealed vault, a realized
proportion of $15.9\%$ with a 95\% upper bound of $22.0\%$; the host's
stricter equality rule finds 26, and the two sets overlap in 21 cases. The
realized proportion exceeds the $5\%$ target and its upper bound exceeds
the $2\alpha = 10\%$ budget, so the criterion fails under both readings. Tightening as the protocol permits does not
recover the target: across the four thresholds the realized proportion
stays between $14.7\%$ and $16.5\%$, and sixteen of the twenty-two carry
\emph{full} evidence, all three families agreeing on every sampled
instance, so no threshold on the amount of agreement separates them
(Figure~\ref{fig:e3}a in Appendix~\ref{app:e0}). More resampling from the
same specification cannot show whether that specification represents the
source data, and the audit below locates the defect there.

We audit all twenty-two under a protocol that assumes neither value
correct and requires every verdict to cite the text against a code line
or an explicit derivation (Appendix~\ref{app:e0}); the audit is the
authors' own, and the released packets and per-case attribution file
allow re-audit. It changes our initial taxonomy (Table~\ref{tab:attr} and
Figure~\ref{fig:e3}b). We expected gate defects, in which the agreeing
models omitted a stated constraint, and legitimate alternative readings of
ambiguous text; the audit finds no case in either category. Two cases are
label errors, both re-derived independently of the gate, and in both the
published label is \emph{below} the true minimum of the stated
minimization problem and cannot be attained by any feasible solution.

The remaining twenty cases share a root cause that was not preregistered:
\emph{the problem text is not a faithful encoding of the labeled
instance}. In fifteen cases, the printed text omits decisive data,
typically a cost or demand matrix. In five cases, parameters are printed
at insufficient precision, so the optimum of the stated text differs from
the label by more than the scoring tolerance. One case is diagnosable from
the sign of the objective alone: the accepted value is \emph{negative},
which is impossible under any nonnegative cost matrix, so the extractor
must have supplied entries that the text never printed, while the
magnitude of the label is consistent with a real instance of the stated
form (Appendix~\ref{app:e0}). We call this category (d): the text does not
determine the label.

Category (d) is a property of the benchmark, not a solver failure. At
least fifteen of the 138 admitted problems, $10.9\%$, cannot be answered
from the printed text by any method. The rate is measured on the admitted
subset, where a false-discovery rate is defined, and it lower-bounds the
measurable FDR of any system that, like ours, conditions its candidates
on a single extraction of the printed text and admits a similar set. The
estimate is conservative: only the disagreements were audited, and the
five truncated-precision cases are excluded. Expert audits report
aggregate error rates for these benchmarks without separating the errors
behind them \citep{orsurvey2025}; to our knowledge this category has not
previously been isolated, nor quantified as a floor on the attainable
false-discovery rate.

These failures reveal a limitation of consensus-based verification. All
three families solve a common base specification produced by one
extractor, so when the text omits a matrix the extractor fills it and the
three independent families agree on the same fabricated instance;
cross-family agreement certifies the extracted instance, not the one the
benchmark author intended. NANO-CO texts are generated from their
instances, so clause (i) of Assumption~\ref{a:transfer} holds there by
construction, and the benchmark stream violates exactly that clause: the
failure is a measured violation of a stated assumption, not a defect in
Proposition~\ref{prop:fdr}. The remedy therefore has to act on the
extracted numbers before any model is generated. Agreement between
independent extractors, the obvious candidate, does not work at this
granularity: on the 22 audited failures and 22 concordant controls, three
extractor families never once agreed on parameter names, and after
aligning parameters by shape and value, disagreement was as common on
faithful texts, 21 of 22, as on unfaithful ones, 18 of 22. A simpler check does separate them: every
base value in the extracted specification should be printed in the text.
Applied after the fact to the pilot's stored specifications, this
numeric-coverage check flags 9 of the 15 missing-data cases, none of the 5
truncated-precision cases or the 2 label errors, none of the 116
concordant admissions, and none of the 148 calibration specifications
(Appendix~\ref{app:datalayer}); it is the check we specify for deployment.

The cost of the gate is predictable: relative to majority vote over the
same panel, it adds one extraction call per problem and six solver runs
per candidate, and on problems of this size the solver runs cost far less
than one model call; complete call-level logs accompany the release.

\section{Limitations}
\label{sec:limits}
First, the gate certifies agreement across independently derived
behaviors, not the intended meaning of the problem: a misinterpretation
shared by every family survives any amount of resampling, which is why the
target is the false-discovery rate over admitted items rather than a
guarantee for each one.

Second, higher admission precision reduces coverage: the gate withholds a
verdict on a substantial fraction of the stream, mostly for want of
informative instances, and the withheld values were mostly correct; the
cause lies in the extracted domains, which yield infeasible instances for
most draws. Third, the pilot admits host trajectories by matching the
accepted value, so the model-level evidence of resampling is not used at
admission; applying the behavioral test to the host's own trajectories
would require parameterized code, which the host does not produce. Fourth,
the guarantee of Proposition~\ref{prop:fdr} is conditional on
Assumption~\ref{a:transfer}, whose violation Section~\ref{sec:e3}
measures, and an escalation policy other than the calibrated one weakens
it further (Lemma~\ref{lem:policy}). Fifth, the host uses one generation
backbone, and the intermediate judges of Table~\ref{tab:ablation} were
evaluated at admission but not downstream, because the host backbone was
withdrawn by its providers during the study, and we do not evaluate a
single-model selection judge, the configuration whose false-positive rate
\citet{selfplayjudge2026} measures at $0.719$.

\section{Conclusion}
\label{sec:conclusion}
\admitor{} decides whether a solved trajectory should enter a persistent
library before the agent reuses it. In the controlled judge swap it
reaches the highest candidate-level admission precision and downstream
macro accuracy with the smallest library, and the ablation on the same
records attributes the gain to an accepted value that is external to the
learner and unanimous across model families. The benchmark-stream
experiment clarifies when the method succeeds and when it fails:
calibrated FDR does not transfer when benchmark texts omit or round the
numbers needed to reproduce the labeled answer, and a check of the
extracted numbers against the text, which flags most of these cases
without labels, must come before any model-level evidence.

\subsection*{AI use statement}
We used generative AI tools for the following tasks. In the
required-disclosure categories: implementing analysis code used to
compute reported statistics, providing feedback on experimental design,
and assisting in the interpretation of experimental results. In the
recommended-disclosure categories: editing the manuscript for
readability, and editing, refactoring, and debugging research code. We
did not use generative AI tools to generate data or to alter any
recorded experimental output; all reported quantities are computed by
the released code from the released logs, and the analysis scripts were
validated against reconciliation checks that reproduce previously
published values before any new number was accepted. All AI-assisted code
was executed and verified by the authors, and all AI-assisted text was
reviewed by the authors. We take responsibility for the final content of
this work, including text, claims, and artifacts produced with the aid of
generative AI.

\bibliography{iclr2027_conference}
\bibliographystyle{iclr2027_conference}

\newpage
\appendix

\section{Proofs}
\label{app:proofs}

\subsection{Identifiability by resampling}

\paragraph{Assumptions.}
\textbf{(A1)} The perturbation domain $\Theta \subset \mathbb{R}^d$ is
compact and full-dimensional, and the resampling law $P$ has a density
bounded below by $\rho > 0$ times Lebesgue measure $\lambda$ on $\Theta$.
\textbf{(A2)} Each candidate $M$ is solver-exact for the model its code
constructs. On input $\theta$, it returns
$V_M(\theta) = \min_{x \in X_M} c_M(\theta)^\top x$ subject to
$A_M x \le b_M(\theta)$, where $c_M(\cdot)$ and $b_M(\cdot)$ are affine
in $\theta$, the constraint matrix $A_M$ does not depend on $\theta$, and
$X_M$ imposes integrality on a subset of coordinates.
\textbf{(A3)} Over $\Theta$, each candidate has finitely many optimal
integer patterns. Bounded integer variables and rational data guarantee
this condition. In addition, $V_M$ is finite on $\Theta$. The harness
discards infeasible or unbounded draws as uninformative, so each verdict
is conditional on finiteness.

\begin{lemma}[Piecewise-polynomial structure]
\label{lem:pw}
Under (A2) and (A3) there is a finite partition of $\Theta$ into
relatively open, full-dimensional cells $C_1, \dots, C_K$ (plus a
Lebesgue-null boundary set) such that on each cell $V_M$ coincides with a
polynomial of degree at most two in $\theta$, affine in $\theta$
whenever the perturbation touches only $b_M$ or only $c_M$.
\end{lemma}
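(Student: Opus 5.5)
The plan is to write $V_M$ as a pointwise minimum of finitely many polynomial pieces, each valid on a region cut out by finitely many polynomial inequalities, and then to take the cells to be the connected components of the complement of the finitely many algebraic hypersurfaces where these descriptions change.

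\emph{Step 1 (reduce to linear programs).} By (A3), let $Z=\{z_1,\dots,z_r\}$ be the finite set of integer patterns that are optimal somewhere on $\Theta$. Then
\[
V_M(\theta)=\min_{z\in Z} W_z(\theta),
\]
where $W_z(\theta)$ is the optimal value of the linear program in the continuous coordinates. In that program the integer coordinates are fixed at $z$, the right-hand side becomes $b_M(\theta)-A_{M,\mathrm{int}}z$, and $W_z(\theta)=+\infty$ when the program is infeasible. The new right-hand side is still affine in $\theta$, and the constant $c_{M,\mathrm{int}}(\theta)^\top z$ is affine in $\theta$.

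\emph{Step 2 (basis decomposition).} For each $z$ and each basis $B$ of the fixed matrix $A_M$, the basic solution is $A_B^{-1}(\text{affine in }\theta)$, and its objective value is
\[
c_B(\theta)^\top A_B^{-1}\bigl(b_B(\theta)-\text{const}\bigr)+c_{\mathrm{int}}(\theta)^\top z .
\]
This is a product of two affine functions, hence a polynomial of degree at most two. It is affine when only $b$ varies or only $c$ varies, because then one factor is constant.

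The region where $B$ is optimal is the intersection of two sets. Primal feasibility is a set of inequalities in $b(\theta)$ only, so they are affine in $\theta$. Dual feasibility is a set of inequalities in $c(\theta)$ only, so they are also affine in $\theta$. Hence each optimality region is a polyhedron $P_{z,B}$.

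Since $V_M$ is finite on $\Theta$ by (A3), on each region the value $V_M(\theta)$ equals the minimum of the pieces $q_{z,B}(\theta)$ over those pairs $(z,B)$ with $\theta\in P_{z,B}$. Degenerate bases only duplicate pieces, so they do no harm.

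\emph{Step 3 (cells).} Let $\mathcal H$ be the finite union of the following sets:
\begin{itemize}
\item the bounding hyperplanes of all the $P_{z,B}$;
\item the zero sets of $q_{z,B}-q_{z',B'}$ for every pair of pieces that are not identically equal (identical pieces are merged first).
\end{itemize}
Each of these is the zero set of a nonzero polynomial, so $\mathcal H$ is Lebesgue-null.

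On a connected component $C$ of $\mathrm{int}(\Theta)\setminus\mathcal H$, the following are constant:
\begin{itemize}
\item the membership pattern of $\theta$ in the polyhedra $P_{z,B}$, because $C$ crosses no bounding hyperplane;
\item the ordering of the active pieces, because two distinct pieces never tie on $C$, and by continuity and connectedness their order cannot switch.
\end{itemize}
Therefore $V_M$ equals one fixed polynomial on $C$. The set $\mathrm{int}(\Theta)\setminus\mathcal H$ is open and semialgebraic, so it has finitely many connected components. Each component is open, and hence full-dimensional. The remainder, $\mathcal H\cup\partial\Theta$, is null.

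\emph{Main obstacle.} I expect the delicate step to be the finiteness and identification of pieces, not the algebra. There are two concerns.
\begin{itemize}
\item I would first check that (A3) indeed lets us restrict the minimum to the finite set $Z$ uniformly over $\Theta$. A pattern outside $Z$ is never optimal, so omitting it does not change the minimum.
\item Ties between pieces that coincide identically must be merged, so that every tie set is a genuine hypersurface. Otherwise $\mathcal H$ could have positive measure.
\end{itemize}
If the semialgebraic finiteness citation is unwanted, an alternative is to work with the hyperplane arrangement alone in the affine cases. The degree-two tie sets would then be handled by noting that a nonzero polynomial has finitely many sign-connected regions, by a Milnor–Thom type bound.
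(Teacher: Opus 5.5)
Your proposal is correct and follows the paper's own route. Both arguments fix each of the finitely many integer patterns and use parametric LP to obtain polyhedral critical regions, on which the value is a product of two affine maps in $\theta$ (degree at most two, affine when only $b_M$ or only $c_M$ varies); both then write $V_M$ as a pointwise minimum over patterns and refine by the critical regions and the loci where the minimizer changes.

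You make explicit what the paper's ``common refinement'' leaves implicit. You restrict the minimum to patterns optimal somewhere on $\Theta$, merge identical pieces so that every tie locus is the zero set of a nonzero polynomial and hence null, and get finiteness from the connected components of a semialgebraic set. Like the paper, however, you tacitly need $\Theta$ to have null boundary and, as written, to be semialgebraic. This holds for the box-shaped per-parameter domains used here but is not part of (A1). Taking components of $\mathbb{R}^d\setminus\mathcal H$ and intersecting them with $\mathrm{int}(\Theta)$ removes the semialgebraicity requirement.
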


\begin{proof}
Fix an integer pattern $z$ and consider the LP obtained by fixing the
integer coordinates at $z$. Parametric linear programming partitions the
parameter space into finitely many polyhedral critical regions. Within
each region, an optimal basis $B$ remains fixed, and the LP value is
$c_M(\theta)^\top x_B(\theta)$. Because $x_B(\theta)$ is affine in
$b_M(\theta)$, it is also affine in $\theta$. Since $c_M(\theta)$ is
affine in $\theta$, their product is a polynomial of degree at most two.
It reduces to an affine function when only one block varies. By (A3), the
MILP value is the pointwise minimum of finitely many such functions, one
for each integer pattern feasible at $\theta$. Pattern feasibility is a
polyhedral condition because it is affine in $b_M(\theta)$. A common
refinement of the critical regions and the loci at which the minimizing
pattern changes yields the finite cell partition after excluding the
null set of cell boundaries.
\end{proof}

\begin{propositionformal}
Let $M_1, M_2$ satisfy (A1) through (A3), write
$\Delta(\theta) = V_{M_1}(\theta) - V_{M_2}(\theta)$, and let
$D = \{\theta \in \Theta : \Delta(\theta) \neq 0\}$. Then exactly one of
the following holds. (i) $\lambda(D) = 0$: the two candidates agree
$P$-almost surely, deliver the same certified value on almost every
instance, and any disagreement is confined to a Lebesgue-null set.
(ii) $\lambda(D) > 0$: then $p := P(D) \ge \rho\,\lambda(D) > 0$, and
for i.i.d.\ draws $\theta_1, \dots, \theta_m \sim P$,
\[
\Pr\bigl[\Delta(\theta_j) = 0 \text{ for all } j \le m\bigr]
  = (1 - p)^m \le e^{-pm}.
\]
Moreover, with an agreement tolerance $\varepsilon > 0$ the same bound
holds with $D$ replaced by $D_\varepsilon = \{\theta : |\Delta(\theta)| >
\varepsilon\}$ whenever $\lambda(D_\varepsilon) > 0$; if
$\lambda(D_\varepsilon) = 0$, the two candidates are indistinguishable at
tolerance $\varepsilon$ and no amount of resampling separates them.
\end{propositionformal}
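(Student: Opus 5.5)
The argument is short: most of the work is the measurability and lower-bound bookkeeping, and the independence computation is immediate.

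\textbf{Step 1: the dichotomy.} By Lemma~\ref{lem:pw}, applied to each of $M_1$ and $M_2$, we take a common refinement of their two finite cell partitions. Call the resulting cells $C_1,\dots,C_K$; they are relatively open, full-dimensional polyhedral pieces, and their union misses only a Lebesgue-null boundary set $N$. On each $C_k$ the difference $\Delta$ coincides with a polynomial $q_k$ of degree at most two. In particular $\Delta$ is Borel measurable on $\Theta\setminus N$, so $D$ and $D_\varepsilon$ are measurable up to null sets, which is all we need. The alternative ``exactly one of (i), (ii)'' is just the trichotomy-free statement $\lambda(D)=0$ or $\lambda(D)>0$. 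Case (i) then follows from (A1): $P$ is absolutely continuous with respect to $\lambda$ on $\Theta$, so $\lambda(D)=0$ gives $P(D)=0$. Hence the two candidates agree $P$-almost surely.

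\textbf{Step 2: the open-set claim behind case (ii).} To justify the informal claim that a disagreement region ``contains an open set'', we use the polynomial structure. If $\lambda(D)>0$, some cell $C_k$ has $\lambda(D\cap C_k)>0$, so $q_k\not\equiv0$ on $C_k$. A nonzero polynomial vanishes only on a null algebraic set. Since $C_k$ is open and $q_k$ is continuous, $\{q_k\neq0\}\cap C_k$ is a nonempty open set contained in $D$. For $D_\varepsilon$ the same argument works with $\{|q_k|>\varepsilon\}\cap C_k$, which is open by continuity and nonempty whenever $\lambda(D_\varepsilon\cap C_k)>0$.

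\textbf{Step 3: the bound.} The lower bound on $p$ uses the density: $P(D)=\int_D \frac{dP}{d\lambda}\,d\lambda\ge\rho\lambda(D)>0$. Because the draws are i.i.d., the events $\{\theta_j\notin D\}$ are independent, each with probability $1-p$. Their intersection therefore has probability $(1-p)^m$, and $1-p\le e^{-p}$ gives $(1-p)^m\le e^{-pm}$. The tolerance version is identical with $D_\varepsilon$ and $p_\varepsilon=P(D_\varepsilon)\ge\rho\lambda(D_\varepsilon)$. If $\lambda(D_\varepsilon)=0$, then $P(D_\varepsilon)=0$, so almost surely every draw satisfies $|\Delta(\theta_j)|\le\varepsilon$. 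Every trace then passes the agreement test, for every $m$, and this is the indistinguishability statement.

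\textbf{Main obstacle.} The delicate point is the conditioning in (A3). The harness discards draws where some $V_M$ is infeasible or unbounded, and (A3) assumes finiteness on $\Theta$ precisely so that $P$ is the law of the usable instances. If finiteness held only on a subset $\Theta_f$, I would restate the result with $P$ replaced by $P(\cdot\mid\Theta_f)$ and $\rho$ by $\rho/P(\Theta_f)$, assuming $\lambda(\Theta_f)>0$. I would also check that Lemma~\ref{lem:pw}'s boundary set is genuinely null: it is a finite union of hyperplane pieces and pattern-switch loci, and the latter are zero sets of nonzero polynomials of degree at most two. This nullness is what makes Step 2's reduction to a single cell legitimate.
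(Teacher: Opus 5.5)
Your proposal is correct and follows essentially the same route as the paper: a common refinement of the cells from Lemma~\ref{lem:pw}, the null zero set of a nonzero polynomial on a full-dimensional cell, the density bound $P(D)\ge\rho\,\lambda(D)$ combined with independence to get $(1-p)^m$, and the open sets $\{|q_k|>\varepsilon\}\cap C_k$ for the tolerance case. The paper splits the dichotomy on whether every $q_k$ vanishes rather than on $\lambda(D)$, which is equivalent; one small slip is that when both $b_M$ and $c_M$ vary the pattern-switch loci are quadric, so the cells need not be polyhedral as you say in Step 1, but your argument does not use polyhedrality.
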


\begin{proof}
Apply Lemma~\ref{lem:pw} to both candidates and take a common refinement.
The restriction of $\Delta$ to each full-dimensional cell $C_k$ is then
a polynomial $q_k$. If every $q_k$ is identically zero, $\Delta$
vanishes outside the null boundary set, and case (i) holds. Otherwise,
some $q_k$ is a nonzero polynomial on the full-dimensional cell $C_k$.
The zero set of a nonzero polynomial within $C_k$ has Lebesgue measure
zero. Therefore, $\lambda(D \cap C_k) = \lambda(C_k) > 0$, which
establishes the premise of case (ii). Independence gives the probability
bound directly. By (A1), each draw belongs to $D$ with probability
$P(D) \ge \rho\,\lambda(D)$, so all $m$ draws miss $D$ with probability
$(1-p)^m$. For the tolerance statement, $D_\varepsilon$ differs from
$\bigcup_k (\{|q_k| > \varepsilon\} \cap C_k)$ by a null set, and each
piece of this union is open because $q_k$ is continuous on the open cell
$C_k$. Hence $\lambda(D_\varepsilon) > 0$ exactly when some piece is
nonempty, in which case $P(D_\varepsilon) \ge \rho\,\lambda(D_\varepsilon)
> 0$ and the product bound applies with $p$ replaced by
$P(D_\varepsilon)$. The zero-set argument says nothing about
$\lambda(D_\varepsilon)$: a nonzero polynomial can stay within
$\varepsilon$ on all of $\Theta$, and then $\lambda(D_\varepsilon) = 0$
and the detection probability is zero.
\end{proof}

\paragraph{Remarks.}
(1) The dichotomy defines the certification semantics. Candidates that
differ only on a null set are behaviorally indistinguishable and certify
the same value almost surely. Thus, equivalence under the gate is
almost-everywhere equivalence. (2) The base instance $\theta_0$ is not
sampled. Instance $0$ is mandatory and evaluated deterministically, so a
disagreement at the stated instance is detected with probability one,
independently of the bound. (3) The detection power is
$1 - (1-p)^m \ge 1 - e^{-pm}$. Increasing $m$ through escalation
increases this quantity. (4) If perturbations enter the constraint
matrix, the cell-wise values become rational functions. The zero-set
argument still applies to nonzero real-analytic functions. The dichotomy
is therefore not restricted to affine perturbations, although our
analysis does not rely on this extension. (5) The result assumes a
full-dimensional continuous domain. In the pilot, 173 of the 298
extracted specifications contain at least one integer-valued parameter,
and the sampler draws such parameters continuously and rounds them. The
guarantee therefore applies to the continuous coordinates, conditional
on the rounded ones; two inequivalent models can agree on every lattice
point of an integer coordinate, and we make no claim there.

\subsection{Policy-matched calibration}

\paragraph{Setup.}
A \emph{policy} $\pi$ is a measurable map from the complete record
$\omega$ of a problem to a final admission score. The record includes
the text, panel randomness, and instance randomness, all of which are
internal to that problem. We write the score as
$S = g_\pi(\omega) \in \mathbb{R}$. An escalation ladder is a policy in
which borderline intermediate scores trigger additional instances or
candidates before the final score is emitted, with a cap of $K$ rounds.
Calibration draws $\omega_1, \dots, \omega_n$ i.i.d.\ from a
by-construction law $Q$ that includes \emph{null} records observed with
certainty. These null records are accepted certificates for
solver-verified calibration problems whose certified values contradict
the verified answers. Deployment evaluates test problems whose null
records are exchangeable with the calibration nulls. The gate accepts a
test problem when the conformal p-value computed from the calibration
scores passes the selection procedure at level $\alpha$
\citep{jincandes2023}.

\begin{proof}[Proof of Proposition~\ref{prop:fdr}]
(i) is the defining exactness of the Clopper--Pearson interval for a
binomial proportion: for fixed $\tau$ the selected calibration
certificates are, under the sampling model, independent Bernoulli trials
with parameter $p_\tau$, and $U_\tau$ is constructed so that
$\Pr(p_\tau > U_\tau) \le \delta$ with no asymptotic step. (ii) The
selection $\tau^{\ast}$ is measurable with respect to the calibration
data, so the event $\{p_{\tau^{\ast}} > 2\alpha\}$ is contained in
$\bigcup_{\tau \in T} \{U_\tau \le 2\alpha \ \text{selected while}\
p_\tau > 2\alpha\} \subseteq \bigcup_{\tau \in T}
\{p_\tau > U_\tau\}$, whose probability is at most $|T|\delta$ by (i)
and the union bound. The factor $|T|$ is the entire price of the
data-dependent selection; the pilot grid has four points. (iii) Under
clause (ii) of Assumption~\ref{a:transfer}, deployment certificates at
score $\ge \tau^{\ast}$ and calibration certificates at the same score
are exchangeable with respect to the true/false label, so their false
rates share the parameter bounded in (ii); clause (i) is the mechanism
premise under which the extracted specification, and hence the score, is
a function of the intended instance. (iv) With a growing null pool, the
conformal p-value of a test certificate is uniformly valid under
exchangeability alone, and Benjamini--Hochberg over these p-values
controls the false-discovery rate, the expectation of the
false-discovery proportion, at level $\alpha$ in finite samples
\citep{jincandes2023}; the grid certificate is the small-null surrogate
the pilot deploys because the p-value floor $1/(n_{\mathrm{null}}+1)$
with ten nulls cannot clear $\alpha = 0.05$. The policy-matching
requirement of the formal lemma below applies to both
instruments; the pilot's grid certificate inherits it exactly as the
large-sample conformal form does.
\end{proof}

\begin{lemmaformal}
(i) If calibration and deployment apply the \emph{same} policy $\pi$,
the calibration scores $g_\pi(\omega_1), \dots, g_\pi(\omega_n)$
and each null test score $g_\pi(\omega')$ are i.i.d., the conformal
p-values are valid, and the selection procedure controls FDR at level
$\alpha$. (ii) If calibration uses a single-pass policy $\pi_0$ while
deployment uses a ladder $\pi \ne \pi_0$, the guarantee can fail: for
every $K$ there exist score laws and ladders for which the
null-acceptance probability is $1 - (1-\alpha)^K$ against a nominal
$\alpha$, a factor that grows with $K$ and is bounded by $1/\alpha$.
\end{lemmaformal}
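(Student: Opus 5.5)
For part (i), I will reduce to the standard split-conformal argument of \citet{jincandes2023}. The key observation is that a policy $\pi$ is a fixed measurable map $g_\pi$, chosen before any data are seen, and each record $\omega$ contains all of its own randomness: the panel draws, the instance draws, and any extra rounds of the ladder. Because the calibration records $\omega_1,\dots,\omega_n$ and a null test record $\omega'$ are i.i.d.\ (or exchangeable) draws from the same null law, applying the same $g_\pi$ coordinatewise preserves that property. So $g_\pi(\omega_1),\dots,g_\pi(\omega_n),g_\pi(\omega')$ are exchangeable real scores. The escalation ladder is harmless here, because however many rounds it runs (up to $K$), it is simply part of $g_\pi$. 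From exchangeability, the conformal p-value $(1+\#\{i: g_\pi(\omega_i)\ge g_\pi(\omega')\})/(n+1)$ is super-uniform, with ties handled by randomization. Benjamini--Hochberg on these p-values then controls FDR at $\alpha$, exactly as in the cited result. I would state this as a direct citation after checking the exchangeability premise; no new calculation is needed.

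For part (ii), I will give an explicit construction, which I think is the more delicate step. I take a continuous null score distribution $F$ and let single-pass calibration under $\pi_0$ produce scores distributed as $F$. The single-pass conformal rule then accepts a null essentially when its score exceeds the $(1-\alpha)$ quantile $q$ of $F$, so in the limit $n\to\infty$ a single-pass null is accepted with probability exactly $\alpha$. With finite $n$ the acceptance probability is $\alpha$ up to $O(1/n)$; I will treat the large-$n$ limit, or pick $\alpha(n+1)$ to be an integer.

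The ladder $\pi$ works as follows. In each round it draws fresh, independent evidence whose score has law $F$. If that score is at least $q$, it stops and emits it; otherwise it counts the case as borderline, escalates, and redraws. After $K$ rounds it emits the last score. Under this ladder a null is accepted if any of the $K$ independent rounds clears $q$, which happens with probability $1-(1-\alpha)^K$. The null-acceptance probability is therefore $1-(1-\alpha)^K$ against the nominal $\alpha$, because the threshold was calibrated on $\pi_0$-scores rather than $\pi$-scores.

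For the factor, $\bigl(1-(1-\alpha)^K\bigr)/\alpha=\sum_{j=0}^{K-1}(1-\alpha)^j$. This sum is increasing in $K$ and bounded by $1/\alpha$, since the probability cannot exceed $1$. That gives the stated growth and the stated bound. To turn this into FDR failure I will take a test set consisting only of nulls, so the false-discovery proportion equals $1$ whenever anything is accepted. Then the FDR is at least the acceptance probability of a single null, which exceeds $\alpha$ for $K\ge2$.

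The main obstacle is making ``borderline'' in the ladder consistent with the paper's description, where escalation is triggered by intermediate scores. I would define borderline as ``below $q$'', which is a measurable function of the intermediate score, so this fits. I also need the redraws to be independent given the problem. I would justify that by letting the added instances and candidates be fresh randomness inside $\omega$, and choosing a score law that depends only on that fresh randomness.
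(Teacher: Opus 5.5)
Your proposal is correct and follows the paper's proof essentially step for step. Part (i) makes the same observation: a fixed measurable $g_\pi$ applied to i.i.d.\ records gives exchangeable null scores, and the standard split-conformal/BH result is then cited. Part (ii) uses the same construction: i.i.d.\ null round scores, a threshold at the single-pass $(1-\alpha)$-quantile, re-scoring until the first exceedance, and a cap of $K$ rounds. Your general continuous $F$ is equivalent to the paper's $\mathrm{Unif}[0,1]$ by the probability integral transform.

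Two parts go slightly beyond the paper, which stops at the type I rate: your geometric-sum bound on the factor, and the all-null test set that turns excess null acceptance into an FDR violation. The FDR step is cleanest with a single null test point, where BH reduces to $p \le \alpha$.

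One correction: choosing $\alpha(n+1)$ integral makes only the single-pass rate exactly $\alpha$. Under the ladder with an empirical threshold, Jensen's inequality gives a rate strictly below $1-(1-\alpha)^K$ at finite $n$. The exact value therefore needs the population-quantile (large-$n$) reading, which is also the reading the paper uses.
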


\begin{proof}
(i) The function $g_\pi$ is a fixed measurable map applied to i.i.d.\
records. The calibration-null and test-null scores are therefore
exchangeable. The validity of the conformal p-values and FDR control of
the selection procedure follow from the standard split-conformal
argument \citep{jincandes2023}. Two conditions are essential: $\pi$ uses
only the record of the current problem, with no cross-problem
adaptivity, and the same map is applied during calibration and testing.
(ii) Take null intermediate scores i.i.d.\ $\mathrm{Unif}[0,1]$ across
ladder rounds, let $\tau$ be the $(1-\alpha)$-quantile calibrated under
$\pi_0$ (one round), and let the deployed ladder re-score any
below-threshold problem up to $K$ times, accepting on the first
exceedance. A null problem is then accepted with probability
$1 - (1-\alpha)^K$, which approaches $K\alpha$ for small $\alpha$ and
$1$ as $K$ increases, while the nominal budget remains $\alpha$. This is
a statement about the null-acceptance probability, the type I rate; the
false-discovery rate among admissions rises with it unless the acceptance
probability of correct certificates rises in the same proportion, which a
ladder that re-scores only borderline cases does not guarantee.
Generating calibration trajectories under the deployed ladder and
recalibrating $\tau_\pi$ restores case (i) because the ladder is again a
fixed map applied identically during calibration and deployment.
\end{proof}

\paragraph{Remark.}
The counterexample reflects a common implementation pattern.
Retry-until-agreement escalation takes a maximum over dependent scores
near the threshold. Operationally, the same ladder used in deployment
must also generate the calibration trajectories.

\section{Protocol registration and scoring}
\label{app:protocol}

\paragraph{Extraction guardrails.}
The validator applied to every extracted specification reverts conflicting
ranges to relative perturbations, holds structural size parameters fixed
so that resampled instances preserve their dimensions, and widens
degenerate domains with a warning. Under the data-layer stage of
Appendix~\ref{app:datalayer}, a specification whose base values are not
printed in the text is emitted as a data-layer diagnosis rather than a
verdict.

\paragraph{Model and solver pins.}
The host runs with a single fixed backbone, DeepSeek-V3.2 in
non-thinking mode at temperature $0$. The only modifications to the host
are failure handling and call logging, both documented in the release.
The panel of the gate uses DeepSeek-V3.2, GPT-5.4, and Claude Sonnet 4.6,
one candidate per family at temperature $0$. The first and third families
use direct modeling, whereas the second uses a structured strategy.
Solver stacks are divided between Pyomo with HiGHS and \texttt{gurobipy}.
The extractor is the first family's backbone at temperature $0$ with an
8{,}192-token output limit; candidate generation uses a 3{,}000-token
limit. The second and third family names were set through run-time
environment variables, and the code default for the second family was
aligned with the run afterwards. A response-side audit of all 60{,}000
host calls finds two model identifiers, \texttt{deepseek-v3.2}
($97.6\%$) and \texttt{deepseek-v3-2-251201} ($2.4\%$), the second
being a hosting provider's dated tag for the same December 2025 release;
no host call was served by another model, and a per-item join shows no
association between the dated tag and either selection failures or
accuracy. Response identifiers were not logged for the extractor and
candidate calls, which are pinned at request time.

\paragraph{Preregistered decision criteria.}
The following criteria were fixed before collecting admission or
evaluation data, and we report the outcomes without modification. K1: if
the best label-free arm achieves less than $70\%$ of the macro accuracy
of the ground-truth arm, we withdraw the mechanism claim and reposition
the work as selective prediction. K2: if the gate does not outperform
majority vote downstream, we withdraw the consensus-mechanism claim and
retain only the efficiency contribution. K3: calibration fails if the
realized false-discovery rate exceeds $5\%$ in point estimate or $10\%$
at the $95\%$ upper bound. One rerun with a stricter threshold is allowed
and must be reported. K4: a nearly uniform family-by-error matrix weakens
the decorrelation argument and must be reported as such.
Sections~\ref{sec:e1} and~\ref{sec:e3} report K1--K3. Section~\ref{sec:e1}
reports K4, and Table~\ref{tab:k4} provides the full matrix extracted
from the released certification verdicts.

\paragraph{Certification pseudocode.}
Algorithm~\ref{alg:gate} specifies the complete certification procedure
for one problem. Section~\ref{sec:method} describes each step. In the
judge swap of Section~\ref{sec:e1} the threshold test on line 9 is
absent: every clique spanning two families with a value at $\theta_0$ is
accepted, and $\tau$ is applied to the stored verdicts in
Section~\ref{sec:e3}.

\begin{algorithm}[h]
\caption{\admitor{} certification of one problem $P$}
\label{alg:gate}
\begin{algorithmic}[1]
\State extract base-anchored parameter spec $\theta_0$ from $P$; validate and freeze structural sizes
\State draw perturbed instances $\theta_1,\dots,\theta_m$ around $\theta_0$ (seeded); instance $0$ is $P$ itself
\State generate candidate programs $C_1,\dots,C_k$ from distinct model families and strategies; probe each on $\theta_0$, one repair round on failure
\For{each candidate $C_i$ and instance $\theta_j$}
    \State solve; record objective $V_i(\theta_j)$ and solver status
\EndFor
\State keep instances on which at least two candidates return a finite value; if fewer than three remain, return \textsc{uninformative} (escalation in deployment)
\State form the agreement graph on value-function agreement across the informative instances; find a maximum clique $Q$; score $s = 10\cdot|\{f(M): M \in Q\}| + |Q| + n_{\mathrm{inf}}/10$
\If{$Q$ spans at least two families and $s \ge \tau$} \State \textsc{accept}: certify $V_Q(\theta_0)$; relabel the host's trajectories with it
\Else \State \textsc{abstain} with an instance-level diagnosis for each excluded candidate
\EndIf
\end{algorithmic}
\end{algorithm}

\paragraph{Scoring rule.}
One released scorer evaluates every benchmark and experimental arm. A
prediction is correct if it matches the published label within relative
tolerance $10^{-4}$ or if rounding the prediction to the number of
decimal places printed in the label reproduces that label exactly.
Labels are frequently rounded. For example, an exact solver optimum of
$10.3333$ is correct when the printed label is $10.33$. Predictions that
cannot be parsed count as errors. We also release stricter ($10^{-6}$)
and looser ($10^{-2}$) scoring tiers for sensitivity analysis.

\paragraph{Candidate-generation rules.}
We developed the panel prompt once on a five-problem development split
and fixed it before data collection. The Pyomo model object is named
\texttt{model} and is never reused as a loop variable. The prompt avoids
reserved component names and blanket exception handling. A canonical
HiGHS solve-and-extract template defines success by the availability of
an objective value rather than by a status-enum comparison. Extraction
returns base-anchored parameter ranges and never perturbs structural size
parameters. If a candidate fails the base-instance probe, it receives one
repair round with the error context included. The following two prompts
are complete. Line breaks are adjusted for layout, and
\texttt{\{...\}} denotes fill-in fields. Instructions for the solver
stack are inserted for each family. The released repository contains the
complete scripts.

\paragraph{Candidate-generation prompt (verbatim).}
\begin{Verbatim}[frame=single, framesep=2mm, fontsize=\small]
You are an expert in optimization modeling. Write a COMPLETE Python
script that defines a function solve(params) which builds and solves
the optimization problem below.

STRICT RULES:
1. Use {stack}.
2. Every numeric quantity listed in PARAMS must be read from the
   `params` dict (params["<name>"]); do NOT hardcode their values.
   Structural constants (set sizes, index structure) may be inline.
3. solve(params) must return a dict:
   {"objective": <float>, "status": "optimal"} on success;
   {"objective": None, "status": "<short reason>"} if infeasible/failed.
4. No printing, no file I/O, no network. Output ONLY one python code
   block.
5. Name the model object `model`; NEVER reuse `model` or `m` as a loop
   index or comprehension variable. Avoid Pyomo reserved attribute
   names for components (e.g. activate, deactivate, name, clone,
   index, display).
6. Do not wrap the whole body in a blanket try/except; let unexpected
   exceptions propagate (the harness captures them).
[structured strategy only]
7. Before the code, include as a Python comment a compact IR in JSON
   (sets, params, decision variables with types, constraints one line
   each, objective), then implement EXACTLY that IR.

PARAMS (name: meaning):
{param_desc}

PROBLEM:
{question}
\end{Verbatim}

\paragraph{Extraction prompt (verbatim).}
\begin{Verbatim}[frame=single, framesep=2mm, fontsize=\small]
You are an optimization-structure extractor. From the problem below,
list the NUMERIC PARAMETERS (costs, prices, capacities, demands,
budgets, coefficients) together with their ACTUAL VALUES as given in
the problem data.

For each parameter also propose a perturbation domain for sensitivity
testing, anchored at the true values: keep it physically meaningful
but STRESS-TEST constraint activation (allow sign changes only where
economically plausible, e.g. a net profit that may turn negative;
widen enough that different constraints become binding). The domain
must remain feasible-plausible: do not propose ranges that contradict
the parameter's role (e.g. capacities near zero while demands stay
large).

Index-set sizes and cardinalities (number of employees, projects,
cars, ...) must NOT be perturbed: either OMIT them from params
entirely, or freeze them with a degenerate abs range (lo == hi ==
base). Changing a size without resizing every dependent array
produces structurally invalid instances.

Output ONLY a JSON object, no prose, exactly this schema:
{
  "params": {
    "<name>": {
      "meaning": "<short>",
      "base": <number or list, the TRUE values from the problem>,
      "perturb": {"mode": "rel", "r": <0..1>}
                 or {"mode": "abs", "lo": <num>, "hi": <num>,
                     "integer": <bool>}
      (in "abs" mode, lo and hi are ABSOLUTE values on the same scale
       as base, NOT offsets or deltas around it)
    }
  },
  "objective_sense": "max" or "min"
}

PROBLEM:
{question}
\end{Verbatim}

\paragraph{A certified skill (excerpt).}
The following file is one of the 101 items in the gate-built library. It
was distilled from admitted trajectories by the unmodified procedure of
the host, with no manual editing. Workflow 2 and some implementation
details are omitted for space, and typography is adjusted slightly. The
complete file is included in the released library.

\begin{Verbatim}[frame=single, framesep=2mm, fontsize=\small]
name: Bin Packing with Resource Activation
description: Model and solve resource minimization problems with
  capacity constraints using binary assignment and activation
  variables, producing exact or feasible solutions with clear
  verification steps.

# Workflow 1 (CP-SAT with Explicit Linkage)

## Modeling stage

### Strategy Overview
This workflow uses Google's OR-Tools CP-SAT solver, designed for
discrete optimization with Boolean logic. It models the problem with
separate binary variables for assignment and resource usage, linking
them via simple linear constraints.

### Step 1 - Define Core Variables
- Binary assignment variables x[i][j] for each item i and resource j.
- Binary usage variables y[j] for each resource j to track activation.

### Step 2 - Enforce Assignment and Capacity
- Exclusive assignment for each item i:
  sum(x[i][j] for j in resources) == 1.
- Capacity for each resource j:
  sum(weight[i] * x[i][j] for i in items) <= capacity.

### Step 3 - Link Assignment to Usage
- For each i, j add the implication x[i][j] <= y[j], so a resource is
  marked used if any item is assigned to it.

### Step 4 - Formulate Objective
- Minimize the number of used resources:
  minimize sum(y[j] for j in resources).

### Common Pitfalls
- Forgetting to link assignment to usage, which lets unused resources
  escape the objective.
- Setting an insufficient number of resources, causing infeasibility;
  initialize with a safe upper bound like the number of items.

## Solving stage

### Step 1 - Configure Solver
- max_time_in_seconds, num_search_workers, random_seed; for exact
  solutions set relative_gap_limit = 0.0.

### Step 2 - Solve and Check Status
- Accept only OPTIMAL or FEASIBLE; on infeasibility check the lower
  bound ceil(total_weight / capacity).

### Step 3 - Extract and Verify Solution
- Read the objective, per-resource usage, and per-item assignment;
  recompute per-resource total weight against capacity.

### Common Pitfalls
- Not checking both OPTIMAL and FEASIBLE statuses.
- Failing to provide a time limit for large instances.
\end{Verbatim}

\subsection{The data-layer stage}
\label{app:datalayer}
The stage runs before Step 1. Its instrument is a numeric-coverage check
on the extracted specification: every base value, scalar or array entry,
must appear among the numbers printed in the text, matched within relative
tolerance $10^{-6}$ after normalizing separators, percentages, and units. A
specification fails when any array with at least four entries has fewer
than half of its entries printed, or when fewer than $80\%$ of all base
values are printed; a failing ticket routes to \textsc{escalate} with the
unmatched keys, and no candidate is generated. The check was designed after
the audit of Section~\ref{sec:e3} and evaluated post hoc on the pilot's
stored specifications: it flags 9 of the 15 missing-data cases, 0 of the 5
truncated-precision cases, 0 of the 2 label errors, 0 of the 116 concordant
admissions, and 0 of the 148 calibration specifications, whose texts are
generated from their instances (Fisher exact $p < 10^{-3}$ on the 138
admitted problems). It cannot flag truncated precision, where the printed
values are present but rounded, so that sub-class remains outside the reach
of any text-only check.

Agreement between independent extractors, the redundancy design one would
try first, was tested as a preliminary on the 22 audited disagreements and
22 concordant admissions matched on admission score: the stored extraction
of the pilot family plus one fresh extraction each from the second and
third families, with the unchanged prompt at temperature $0$ (88 calls).
Table~\ref{tab:extractors} reports pairwise disagreement at four levels.
Independent extractors never agreed on parameter names, so a rule
requiring identical key sets is unusable. After aligning parameters across
specifications by shape and by base values within scoring tolerance,
disagreement remained as common on faithful texts as on unfaithful ones,
because the families differ in which quantities they list as parameters far
more often than in the values of shared parameters (across all pairs, 1{,}058
unmatched parameters against 48 value conflicts). The truncated-precision
cases behaved as expected, the extractors agreeing on the values the text
prints. Whether all three families fabricated identical values for
unprinted data could not be measured, since a full three-way alignment
succeeded in only 5 of the 44 cases.

\begin{table}[h]
\centering
\caption{Two-extractor preliminary: cases with at least one pairwise
disagreement among three extractor families ($E=3$), by group, with
Fisher's exact test between the 22 audited disagreements and the 22
concordant controls. The five truncated-precision cases are a subset of the
first column.}
\label{tab:extractors}
\vspace{2pt}
\small
\begin{tabular}{lrrrr}
\toprule
Agreement level & Audited & Controls & Fisher $p$ & Truncated \\
\midrule
Raw parameter names & 22 / 22 & 22 / 22 & 1.00 & 5 / 5 \\
Parameter sets after alignment by shape and value & 18 / 22 & 21 / 22 & 0.34 & 1 / 5 \\
Elementwise values after alignment & 18 / 22 & 21 / 22 & 0.34 & 1 / 5 \\
Perturbation domains after guardrails & 22 / 22 & 22 / 22 & 1.00 & 5 / 5 \\
\bottomrule
\end{tabular}
\end{table}

\FloatBarrier

\section{Host reproduction details and supplementary tables}
\label{app:e0}

\begin{table}[tb]
\centering
\caption{The preregistered K4 check: candidate arm by terminal outcome,
mined from the 298 stored certification runs; the two error outcomes have
no verdict (zero additional model calls). The profiles differ significantly ($\chi^2 = 30.96$, $\mathrm{df}=8$,
$p = 1.4\times 10^{-4}$).}
\label{tab:k4}
\vspace{2pt}
\small
\setlength{\tabcolsep}{4.5pt}
\begin{tabular}{lrrrrr}
\toprule
Candidate arm & Clique & Value excl. & Infeasible & Partial crash & All crash \\
\midrule
DeepSeek + Pyomo (direct) & 164 & 17 & 71 & 5 & 41 \\
GPT + Pyomo (structured) & 154 & 9 & 75 & 9 & 51 \\
Claude + Gurobi (direct) & 172 & 30 & 73 & 1 & 22 \\
\bottomrule
\end{tabular}
\end{table}

\begin{figure}[tb]
\centering
\includegraphics[width=0.62\linewidth]{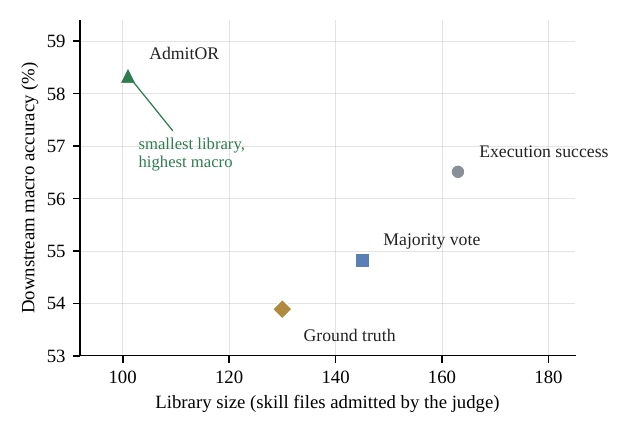}
\caption{Library size and downstream macro accuracy for the four
judges. Less selective judges admit more items and build larger libraries.
The certified library is the smallest of the four and scores highest.
Marker positions are the measured values of Table~\ref{tab:e1} and the
library file counts of Section~\ref{sec:e1}.}
\label{fig:quality}
\end{figure}

\begin{table}[tb]
\centering
\caption{Human-audited attribution of all 22 disagreements. No case is a
gate model defect or a legitimate alternative interpretation. The
dominant category (d) is a property of the benchmark, not of a solver.}
\label{tab:attr}
\vspace{2pt}
\small
\begin{tabular}{lc}
\toprule
Audited root cause & Cases \\
\midrule
(a) gate defect: text states a constraint the clique drops & 0 \\
(c) legitimate alternative reading of ambiguous text & 0 \\
(b) label error (one confirmed by exact re-derivation) & 2 \\
(d) decisive data absent from the printed text & 15 \\
(d) parameters printed at truncated precision & 5 \\
\bottomrule
\end{tabular}
\end{table}

\begin{figure}[tb]
\centering
\includegraphics[width=0.80\linewidth]{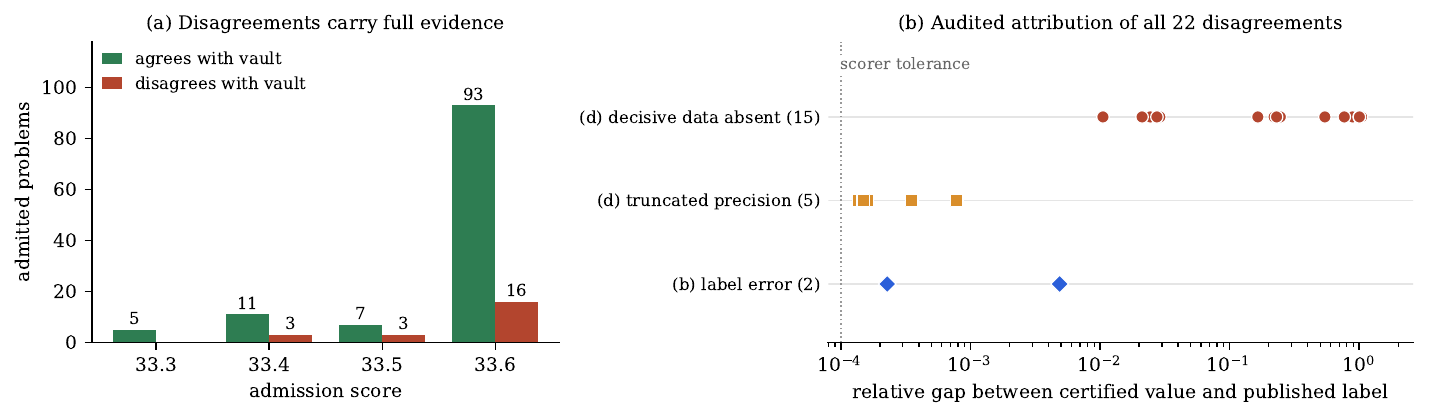}
\caption{The anatomy of the K3 failure, from the released replay report.
(a) Admitted problems by admission score: sixteen of the twenty-two
disagreements sit at the maximal score, indistinguishable there from the
93 correct admissions, so no threshold on the amount of agreement separates them.
(b) Every disagreement placed by its relative gap to the published label
and colored by audited root cause: the truncated-precision cases and the
two label errors hug the scorer tolerance, while the cases whose decisive
data never appear in the text spread across two orders of magnitude.}
\label{fig:e3}
\end{figure}

\paragraph{The comparison against the ground-truth arm.}
On the full item set the gain of \admitor{} over the ground-truth-labeled
library is $+4.46$ points, with a $95\%$ interval of $[+2.34, +6.65]$.
Restricted to the sensitivity item set, with the 85 anomalous items
removed from every arm, it narrows to $+2.14$ points, $[+0.04, +4.29]$.
This comparison is reported as an observation and no claim of the paper
rests on it. On ComplexOR, the smallest plate, a single item is worth
$1.11$ macro points.

\paragraph{Bootstrap and library-size bookkeeping.}
Intervals are drawn from 10{,}000 paired resamples within each benchmark
(seed 42, released script). On the item-weighted scale the gain over
majority vote is $+1.73$, $[-0.27, +3.73]$, and over execution success
$+1.27$, $[-0.73, +3.27]$; neither excludes zero.
The benchmarks differ in size by a factor of thirty, and the
item-weighted column of Table~\ref{tab:e1} preserves the ordering of the
macro column with smaller margins. The four library sizes are 163, 145,
130 and 101 files for execution success, majority vote, ground truth and
\admitor{} respectively. On the sensitivity item set the ground-truth arm reaches
$73.5$ on OptiBench.

\paragraph{The ComplexOR label error at evaluation.}
On ComplexOR the pipeline returns $250$ for an instance labeled $200$,
and manual verification establishes $250$ as the true optimum, so the
model is penalized for a correct answer. ComplexOR scores $66.7$ under
the published labels and matches the reported $72.2$ exactly once that
single label is corrected. The main results retain the published labels,
because the protocol was fixed before any judge was evaluated.

\paragraph{Calibration-set counts.}
On the 150 stratified NANO-CO instances the gate returns 103
\textsc{accept}, 36 \textsc{uninformative}, 9 \textsc{abstain}, and 2
error outcomes. Of the 103 accepts, 39 are two-family cliques (score
$22.x$) and 64 are three-family cliques (score $33.x$). Comparison with
the verified answers finds 10 false cliques: 8 among the two-family
accepts, which the threshold excludes, and 2 among the three-family
accepts, one of which carries no value at $\theta_0$ and therefore
certifies nothing. Table~\ref{tab:e3} reports the per-threshold counts.
The replayed rule admits 138 of the 170 value-bearing non-dev verdicts, or
$81.2\%$; the 170 are the 174 accepts less three dev-split problems and
one accept without a value at $\theta_0$. The 26 disagreements under the
host's equality rule and the 22 under the round-aware scorer are not
nested: 21 are common, 5 are wrong only under the equality rule, and 1
only under the round-aware scorer.

\paragraph{Decomposition of the uninformative verdicts.}
Of the 114 \textsc{uninformative} certifications in Section~\ref{sec:e1},
65 are dominated by infeasible resampled instances, 45 contain at least
two candidates that never return a value, and 4 result from execution
failure. None arises from conflicting evidence.

\paragraph{The audit protocol and its packets.}
Each of the twenty-two disagreements was reviewed from a case-specific
packet containing the problem text, the code of every clique member, both
values, the relative gap, and the agreement pattern. The protocol assumes
neither value correct and requires each verdict to cite a span of the
text together with a code line or an explicit derivation. Three of the twenty-two verdicts were settled by exact re-derivation
rather than judgment, and the per-case attribution is released as a
table alongside the packets. The two label
errors were settled by re-derivation independent of the gate: for a
six-month capacitated lot-sizing problem in which the capacities never
bind, exact dynamic programming reproduces the certified optimum; for a
contract-allocation problem whose complete instance is printed, exact
solution of the integer program produces the same result. The
sign-checkable category-(d) exemplar is a six-producer allocation problem
that prints two cost entries and states that the remaining entries have
similar structures; the certified objective is negative, which no
nonnegative cost matrix admits. The post-hoc screen referenced in
Section~\ref{sec:e3} derives its bounds from the text alone, using the
sign of the optimum and cost lower bounds implied by total demand, and
suggests an inexpensive pre-consensus stage. It flags nine of the
twenty-two disagreements without using labels, was applied after the
fact, and affects no reported result.

\paragraph{Certification cost budget.}
Certification spends a fixed budget per problem: one extraction call, one
generation call per family, at most one repair call per candidate, and
$(m+1)k$ solver runs ($m=5$, $k=3$ here). Majority vote over the same
panel consumes the generation calls alone, so the gate adds one
extraction call per problem and six solver runs per candidate, plus any
repair calls.

\paragraph{The threshold sweep and the confidence bookkeeping.}
The four attainable thresholds differ only in the number of informative
instances demanded, since family coverage and clique size are saturated
on every admitted certificate. The realized false-discovery rates across
them are $15.9\%$, $16.5\%$, $16.0\%$ and $14.7\%$; the strictest, which
requires three families, a three-member clique and six informative
instances, still admits 109 cases with 16 disagreements. On the
confidence side, $\delta = 0.05$ per threshold makes the
selected-threshold certificate a $95\%$ statement read per threshold and
the simultaneous statement of Proposition~\ref{prop:fdr}(ii) a
$1-|T|\delta = 80\%$ statement over the four-point grid; the
$95\%$-simultaneous reading corresponds to $\delta = 0.0125$, under
which the value-bearing certificate at $\tau = 33.3$ has upper bound
$9.7\%$ (Table~\ref{tab:e3}).

\paragraph{Reading the family-by-outcome matrix.}
The profiles of Table~\ref{tab:k4} separate along three interpretable
dimensions. Execution failures track the solver stack: the two Pyomo arms
fail on $15$ to $20\%$ of problems against $8\%$ for Gurobi, reproducing
at scale a pattern first seen during development. Instance infeasibility
is flat across arms at 71 to 75, as expected for a property of the
perturbed instance rather than of the candidate. The arm with the fewest
execution failures reaches value comparison most often, producing 30
value-level disagreements against 9. Because all arms are run on shared
instances, the profiles are more similar under the null of identical
mechanisms than independently drawn profiles would be, so the observed
separation is conservative. Two bounds on the reading: model family,
prompting strategy, and solver stack co-vary by design, so the check
establishes that the arms are not interchangeable rather than isolating
family as the source of decorrelation; and the evidence concerns
decorrelation conditional on a shared extracted specification, not at
extraction itself. Restricted to candidates that reached value
comparison, the three-by-two table of clique membership against value
exclusion still rejects uniformity ($\chi^2 = 8.75$, $\mathrm{df}=2$,
$p = 0.013$; permutation $p = 0.012$), and adding the infeasible column
gives $\chi^2 = 10.30$, $\mathrm{df}=4$, $p = 0.036$.

\paragraph{Stream provenance and disjointness.}
The 300-problem stream is the first 300 problems, by index, of the
OptMATH training split; the 150 calibration instances are a stratified
sample of NANO-CO. Compared with all 1{,}100 evaluation items, no stream
or calibration item matches exactly, after normalization, or as a near
duplicate (5-gram token Jaccard at least $0.8$); the maximum Jaccard is
$0.21$ for the stream and $0.02$ for the calibration set, with
OptMATH-Bench included in the comparison. File hashes are in the release.

\paragraph{Judge ablation details.}
The panel judges of Table~\ref{tab:ablation} use the stored
stated-instance values of the three family candidates with the agreement
rule of Step 3; the calibrated judge applies $\tau = 33.3$ to the stored
scores. At the two-family level, the 245 problems on which any two
families agree at $\theta_0$ split into 170 accepted by the
admission-time rule (29 wrong certificates), 68 uninformative (6 wrong),
and 7 abstentions (3 wrong). Against the host's vote, the two rules
certify 224 problems in common and agree on 206 of them; where they
disagree, the host's certificate is wrong in 14 cases and the panel's in
6. On the 30 problems only the host certifies, its certificate is wrong 12
times; on the 21 problems only the panel certifies, its certificate is
wrong 20 times, the host's three samples having disagreed among
themselves. Of the five base-unanimous problems on which resampling
excluded a candidate, four are execution failures on perturbed instances
(an invalid value drawn into a 0/1 matrix, a non-integer demand, a
negative weight, a dimension error) and one is a disagreement in the
values themselves; the stated-instance value is correct in all five. Applying
$\tau = 33.3$ to the deployed library would remove 16 of its 99 clusters
outright and thin 6 more; 11 files hold all 30 poisoned admissions, one
of them 9. Re-drawing the 57 withheld base-unanimous problems with draws
rejected when no candidate solves them, at most ten attempts per draw,
brings 8 to three informative instances; all 8 certify the
base-unanimous value, 7 of them correct against the vault.

\begin{table}[h]
\centering
\caption{Retrieval concentration per arm on the two largest benchmarks:
distinct library files selected by the host, and the share of items on
which the single most-selected file was chosen. Selection-failure rows per
arm over all five benchmarks: ground truth 95 (85 on OptiBench), majority
vote 10, execution success 10, \admitor{} 2; the selector never returned
an identifier absent from its library.}
\label{tab:retrieval}
\vspace{2pt}
\small
\begin{tabular}{lrrrrr}
\toprule
& Library files & \multicolumn{2}{c}{OptiBench (605)} & \multicolumn{2}{c}{Mamo ComplexLP (211)} \\
Arm & & distinct & top share & distinct & top share \\
\midrule
Ground truth & 130 & 31 & 20.3\% & 20 & 20.9\% \\
Majority vote & 145 & 38 & 32.2\% & 22 & 20.9\% \\
Execution success & 163 & 30 & 39.5\% & 19 & 19.9\% \\
\admitor{} & 101 & 19 & 44.6\% & 18 & 22.3\% \\
\bottomrule
\end{tabular}
\end{table}

\paragraph{The OptiBench selection anomaly.}
In the 85 affected items of the ground-truth arm, the selector of the
host repeatedly generates a skill identifier that is not present in the
library. Each affected row is identified in the released logs. The
anomaly shows that library \emph{contents} can affect the reliability of
the selection stage: admission policy determines both what a library
contains and how reliably that library can be searched.

\paragraph{Case study: a label convicts the innocent.}
\label{sec:case}
One instance in a widely used cleaned benchmark carries a published label
of $200$ although its optimum verifies by hand as $250$ (full derivation
in the release). This case touches every stage of our evaluation. In
reproduction (Section~\ref{sec:e0}) the standard host returns $250$ and
is scored as incorrect. At admission, the judges diverge exactly as designed.
The ground-truth judge compares against the published answer and rejects
the correct trajectories, while the three families in \admitor{} agree on
the sampled value-function trace and certify $250$. The behavioral evidence therefore
recovers a correct model that the erroneous label discards. At evaluation
all four arms return $250$ and are penalized, because the preregistered
protocol retains the published labels. Label error can therefore move a
small benchmark materially. The behavior of the gate follows from the
design rather than from a favorable scoring choice, since it never reads
the label.

The calibration replay supplies a complementary example. For one audited
instance the certified and reference models differ by an entire
constraint family, yet their objective values differ by $0.023\%$, and an
exact solution of the fully printed instance reproduces the certified
value. The reference therefore errs in this case. Similar objective values do not
establish model equivalence, just as different values do not by
themselves identify which model is wrong. The audit trail resolves this distinction.

Table~\ref{tab:e3} jointly summarizes the calibration and replay results
from Section~\ref{sec:e3}.

\begin{table}[h]
\centering
\caption{Calibration and replay of the calibrated admission rule. Left: for
each attainable threshold, the value-bearing three-family certificates on
the 150 solver-verified NANO-CO instances, the false ones among them, and
the exact one-sided Clopper--Pearson upper bounds at $\delta = 0.05$ and
$\delta = 0.0125$; the rule of Proposition~\ref{prop:fdr} holds at every
row under $\delta = 0.05$ and at $\tau \in \{33.3, 33.4\}$ under
$\delta = 0.0125$. Counting the one three-family accept without a value at
$\theta_0$ gives 64 certificates and 2 false at $\tau = 33.3$ ($\hat p =
3.1\%$, $U_{0.05} = 9.5\%$, $U_{0.0125} = 12.1\%$). Right: replay on the benchmark
stream, scored against the sealed vault with the uniform scorer (the host's
equality rule in parentheses).}
\label{tab:e3}
\vspace{2pt}
\small
\setlength{\tabcolsep}{4.5pt}
\begin{tabular}{lrrrrr@{\hspace{14pt}}lr}
\toprule
\multicolumn{6}{l}{\textbf{Calibration (NANO-CO)}} &
\multicolumn{2}{l}{\textbf{Replay (benchmark stream)}} \\
$\tau$ & $n$ & false & $\hat p$ & $U_{0.05}$ & $U_{0.0125}$ & & \\
\midrule
33.3 & 63 & 1 & 1.59\% & 7.31\% & 9.71\% & accepts replayed & 170 \\
33.4 & 62 & 1 & 1.61\% & 7.42\% & 9.86\% & admitted at $\tau$ & 138 \\
33.5 & 61 & 1 & 1.64\% & 7.54\% & 10.01\% & disagreements & 22 (26) \\
33.6 & 58 & 1 & 1.72\% & 7.92\% & 10.50\% & realized proportion & $15.9\%$ ($18.8\%$) \\
& & & & & & 95\% upper bound & $22.0\%$ ($25.2\%$) \\
\bottomrule
\end{tabular}
\end{table}

Table~\ref{tab:e0} reports the end-to-end reproduction of the host system
with the released skill library. The repository does not provide scoring
code, so the uniform round-aware scorer defined in
Appendix~\ref{app:protocol} is applied to every benchmark and every
downstream judge arm.

\begin{table}[h]
\centering
\caption{Host reproduction on five public benchmarks (accuracy, \%).
Published labels as-is; annotations below.}
\label{tab:e0}
\begin{tabular}{lrrrr}
\toprule
Benchmark & $n$ & Reported & Ours & $\Delta$ \\
\midrule
IndustryOR & 100 & 36.00 & 36.00 & $0.00$ \\
Mamo.Complex & 211 & 63.51 & 62.09 & $-1.42$ \\
OptiBench & 605 & 77.02 & 75.87 & $-1.15$ \\
ComplexOR & 18 & 72.22 & 66.67 & $-5.55$ \\
OptMATH-Bench & 166 & 61.45 & 56.63 & $-4.82$ \\
\midrule
Macro & & 62.04 & 59.45 & $-2.59$ \\
\bottomrule
\end{tabular}
\end{table}

\paragraph{Annotations.}
\textbf{ComplexOR.} With $n = 18$, each item changes accuracy by $5.56$
points. Therefore, falling within a $\pm 3$-point band requires an exact
match in the number of correct answers. The single difference is the
instance discussed in Section~\ref{sec:e0}. Its published label is
$200$, manual verification gives $250$, and our pipeline returns $250$.
After correcting the label, the score becomes $13/18 = 72.22$, which
exactly matches the reported value. Following the preregistered protocol,
all main tables retain the published label. \textbf{OptMATH.} We rerun a
cluster of $17$ empty predictions in isolation with one worker. Only $3$
change to correct answers, which is below the preregistered materiality
threshold of $5$. We therefore retain the original score. Solver
contention explains at most $1.8$ points of the deficit. The remaining
difference reflects a capability gap in the reproduced pipeline on this
benchmark. No central claim depends on the absolute OptMATH scores. An
audit of the model identifier in each response confirms that every call
used the fixed backbone. \textbf{IndustryOR.} Three labels contain the
sentinel value $-99999$, which provides additional evidence of answer-key
errors in widely used benchmarks. The release includes the complete run
ledger, including call-level logs, segment-level resume records, and
environment snapshots.

\end{document}